\newtheorem{proposition}{Proposition}
\def\eqref#1{equation~\ref{#1}}
\def\1{\bm{1}}
\def\vg{{\bm{g}}}
\def\vx{{\bm{x}}}
\def\vy{{\bm{y}}}
\DeclareMathAlphabet{\mathsfit}{\encodingdefault}{\sfdefault}{m}{sl}
\SetMathAlphabet{\mathsfit}{bold}{\encodingdefault}{\sfdefault}{bx}{n}
\newcommand{\sort}{\mathrm{sort}}
\newcommand{\name}{PiRank}
\newif\ifanonymous
\newif\ifcomments
\newcommand{\s}[1]{{\color{red} SE: #1}}
\newcommand{\ag}[1]{{\color{blue} AG: #1}}
\newcommand{\robin}[1]{{\color{orange} RS to BC: #1}}
\newcommand{\bc}[1]{{\color{purple} BC: #1}}
\newcommand{\s}[1]{}
\newcommand{\ag}[1]{}
\newcommand{\robin}[1]{}
\newcommand{\bc}[1]{}
  \providecommand\BibTeX{{%
    \normalfont B\kern-0.5em{\scshape i\kern-0.25em b}\kern-0.8em\TeX}}}
\author{
Robin Swezey$^1$\thanks{This work was done prior to joining Amazon.}\and
\textbf{Aditya Grover}$^{2,3}$\and
\textbf{Bruno Charron}$^1{}^*$\and
\textbf{Stefano Ermon}$^4$ \and
$^1$Amazon\\
$^2$University of California, Los Angeles\\
$^3$Facebook AI Research\\
$^4$Stanford University\\
\texttt{\{rswezey, bcharron\}@acm.org},
\texttt{adityag@cs.ucla.edu},
\texttt{ermon@cs.stanford.edu}
}
\title{PiRank: Scalable Learning To Rank via Differentiable Sorting}
\begin{document}

\maketitle

\begin{abstract}
A key challenge with machine learning approaches for ranking is the gap between the performance metrics of interest and the surrogate loss functions that can be optimized with gradient-based methods.
This gap arises because ranking metrics typically involve a sorting operation which is not differentiable w.r.t. the model parameters.
Prior works have proposed surrogates that are loosely related to ranking metrics or simple smoothed versions thereof, and often fail to scale to real-world applications.
We propose \name{}, a new class of differentiable surrogates for ranking, which employ a continuous, temperature-controlled relaxation to the sorting operator based on NeuralSort~\cite{grover2019stochastic}. 
We show that \name{} exactly recovers the desired metrics in the limit of zero temperature  and further propose a divide-and-conquer extension that scales favorably to large list sizes, both in theory and practice.
Empirically, we demonstrate the role of larger list sizes during training and show that \name{} 
significantly improves over comparable approaches on publicly available internet-scale learning-to-rank
benchmarks.
\end{abstract}

\section{Introduction}

\label{sec:introduction}

The goal of Learning-To-Rank (LTR) models is to rank a set of candidate items for any given search query according to a preference criterion~\cite{liu2009learning}.
The preference over items is specified via relevance labels for each candidate.
The fundamental difficulty in LTR is that the downstream metrics of interest such as normalized discounted cumulative gain (NDCG) and average relevance position (ARP) depend on the ranks
induced by the model.
These ranks are not differentiable with respect to the model parameters, so the metrics cannot be optimized directly via gradient-based methods.

To resolve the above challenge, a popular class of LTR approaches map items to real-valued scores and then define surrogate loss functions that operate directly on these scores. 
Surrogate loss functions, in turn, can belong to one of three types.
LTR models optimized via \textit{pointwise} surrogates
\cite{cossock2006subset,li2008mcrank,crammer2002pranking,shashua2003ranking} 
cast ranking as a regression/classification problem, wherein the labels of items are given by their individual relevance labels.
Such approaches do not directly account for any inter-dependencies across item rankings.
\textit{Pairwise} surrogate losses 
\cite{herbrich2000large,freund2003efficient,Burges2005,burges2010ranknet,zheng2008general,cao2006adapting,Burges2007,wu2009smoothing}
can be decomposed into terms that involve scores of pairs of items in a list and their relative ordering. 
Finally, \textit{listwise} surrogate losses
\cite{cao2007learning,xia2008listwise,xu2007adarank,yue2007support,taylor2008softrank}
are defined with respect to scores for an entire ranked list.
For many prior surrogate losses, especially those used for listwise approaches, the functional form is inspired via downstream ranking metrics, such as NDCG. However, the connection is loose or heuristically driven. For instance, SoftRank~\cite{taylor2008softrank,wu2009smoothing} introduces a Gaussian distribution over scores, which in turn defines a distribution over ranks and the surrogate is the expected NDCG w.r.t. this rank distribution.

We propose \name{}, a listwise approach where the scores are learned via deep neural networks and the surrogate loss is obtained via a differentiable relaxation to the sorting operator.
In particular, we choose as building block the temperature-controlled  NeuralSort~\cite{grover2019stochastic} relaxation for sorting and specialize it for 
commonly used ranking metrics such as NDCG and ARP. 
The resulting training objective for \name{} reduces to the exact ranking metric optimization in the limit of zero temperature and trades off bias for lower variance in the gradient estimates when the temperature is high.
Furthermore, \name{} scales to real-world industrial scenarios where the size of the item lists is very large but the ranking metrics of interest are determined by only a small set of top ranked items. 
Scaling is enabled by a novel divide-and-conquer strategy akin to merge sort, where we recursively apply the sorting relaxation to sub-lists of smaller size and propagate only the top items from each sub-list for further sorting.

Empirically, we benchmark \name{} against 5 competing methods on two of the largest publicly available LTR datasets: MSLR-WEB30K~\cite{Qin2013} and Yahoo! C14.
We find that \name{} is superior or competitive on 13 out of 16 ranking metrics and their variants, including 9 on which it is significantly superior to all baselines, and that it is able to scale to very large item lists.
We also provide several ablation experiments to understand the impact of various factors on performance.
To the best of our knowledge, this work is the first to analyze the importance of training list size on an LTR benchmark.
Finally, we provide an open-source implementation\ifanonymous{ }\else\footnote{https://github.com/ermongroup/pirank} \fi
based on TensorFlow Ranking~\cite{pasumarthi2019tf}.

\section{Background and Related Work}

The LTR setting considers a finite dataset consisting of $n$ triplets $D=\{q_i, \{\vx_{i,j}\}_{j=1}^L, \{y_{i,j}\}_{j=1}^L\}_{i=1}^n$. 
The $i$-th triplet consists of a query $q_i \in  \mathcal{Q}$, a list of $L$ candidate items represented as feature vectors $\vx_{i,j} \in \mathcal{X}$, and query-specific relevance labels $y_{i,j}$ for each item $j$.
The relevance labels $y_{i,j}$ can be binary, ordinal or real-valued for more fine-grained relevance. For generality, we focus on the real-valued setting.
Given a training dataset $D$, our goal is to 
learn a mapping from queries and itemsets to rankings.
A ranking $\pi$ is a list of unique indices from $\{1, 2, \dots, L\}$, or equivalently a permutation, such that $\pi_j$ is the index of the item ranked in $j$-th position.
Without loss of generality, we assume lower ranks (starting from 1) have higher relevance scores. 
This is typically achieved by learning a scoring function $f: \mathcal{Q} \times \mathcal{X}^L\to \mathbb{R}^L$ \s{i think subscripts shouldn't be there} \robin{Agreed?} \bc{fixed} that maps a query context and list of candidate items to $L$ scores. 
At test time, the candidate items are ranked by sorting their predicted scores in descending order.
The training of $f$ 
itself can be done by a suitable differentiable surrogate objective, which we discuss next.

\subsection{Surrogate Objectives for LTR}

In this section, we briefly summarize prominent LTR approaches with a representative loss function for each category of pointwise, pairwise or listwise surrogate losses.
We refer the reader to the excellent survey by \cite{liu2011learning} for a more extensive review.
Omitting the triplet index, we denote the relevance labels vector as $\vy \in \mathbb{R}^L$ and an LTR model's score vector obtained via the scoring function  $f$ as $\hat{\vy} \in \mathbb{R}^L$.

The simplest pointwise surrogate loss for ranking is the mean-squared error (MSE) between $\vy$ and $\hat{\vy}$:

\begin{align}
\hat{\ell}_{\mathrm{MSE}}(\boldsymbol{y}, \boldsymbol{\hat{y}})=\frac{1}{L}\sum_{i=1}^{L}\left(\hat{y}_i-y_{i}\right)^{2}.
\end{align}

As the example loss above shows, pointwise LTR approaches convert ranking into a regression problem over the relevance labels and do not account for the relationships between the candidate items.
Pairwise approaches seek to remedy this by considering loss terms depending in the predicted scores of pairs of items.
For example, the widely used RankNet~\cite{Burges2005} aims to minimize the number of inversions, or incorrect relative orderings between pairs of items in the predicted ranking. 
It does so by modeling the probability $\hat p_{i, i'}$ that the relevance of the $i$-th item 
is higher than that of 
the $i'$-th item 
as a logistic map of their score difference, for all candidate items $i$, $i'$.
The objective is then the cross-entropy:
\begin{align}\label{eq:ranknet}
\hat{\ell}_{\mathrm{RankNet}}(\boldsymbol{y}, \hat{\boldsymbol{y}})=-\sum_{i=1}^{L} \sum_{i'=1}^{L} \mathbbm{1}
\left(y_{i}>y_{i'}\right) 
\log \hat p_{i, i'}
\end{align}
\s{looks weird, doesn't seem to depend on yhat} \robin{Replace with Phat?} \bc{replaced $P_{i,i'}$ by $\hat p_{i,i'}$ not to be confused with a permutation matrix and added that it is a function of $\hat \vy$}
where $\mathbbm{1}(\cdot)$ denotes the indicator function
and $\hat p_{i,i'}$ is a function of $\hat\vy$.
Pairwise approaches effectively model relationships between pairs of items and generally perform better than pointwise approaches, but still manifest limitations on downstream metrics which consider rankings in the full list and not just pairs.
In fact, the larger the list of candidate items, the weaker these approaches tend to be: 
an error between the first and second item on a list is weighted the same in the RankNet loss as one between the last two items, despite the top items being of more importance in the LTR setting.

Listwise approaches learn from errors on the complete list.
LambdaRank~\cite{Burges2007} extends RankNet by assigning weights to every loss term from Eq.~\ref{eq:ranknet}:
\begin{align}\label{eq:lambdarank}
\hat{\ell}_{\mathrm{LambdaRank}}(\boldsymbol{y}, \hat{\boldsymbol{y}})=-\sum_{i=1}^{L} \sum_{i'=1}^{L} \Delta \ell_{\mathrm{NDCG}}(i, i')
\log \hat p_{i, i'}
\end{align}
with $\Delta \ell_{\mathrm{NDCG}}(i, i')$ the difference in the downstream metric NDCG (defined below) when swapping items $i$ and $i'$.\s{NDCG needs to be defined before you use it. or at least say "defined below".}\robin{Define the ranking metrics before the approaches to work backwards from them?}\bc{fixed}

\subsection{Ranking Metrics}
\label{sec:ir}

Downstream metrics operate directly on the predicted ranking $\hat\pi$ (obtained by sorting $\hat{\vy}$ in descending order)
and the true relevance labels $\vy$.
They differ from conventional metrics used for other supervised learning problems in explicitly weighting the loss for each item by a suitably choosen increasing function of its predicted rank.
For example, relevance position (RP)~\cite{zhu2004recall} multiplies the relevance labels with linearly increasing weights, and normalizes by the total relevance score for the query:
 \begin{align}\label{eq:ranked_precision}
\mathrm{RP}(\vy, \hat \pi)=\frac{\sum_{j=1}^{L} y_{\hat\pi_j} j}{\sum_{j=1}^{L} y_{j}}
 \end{align}
Averaging RP across the predictions made for all the queries in the test set gives the average relevance position (ARP) metric. Lower ARP signifies better performance.

A very common metric is the discounted cumulative gain (DCG)~\cite{jarvelin2002cumulated}.
 DCG computes the rescaled relevance of the $j$-th candidate by exponentiating its relevance label, and 
 further divides it by 
 the assigned log-ranking. 
 This model incentivizes ranking models to focus on elements with higher graded relevance scores:
 \begin{align}
 \label{eq:DCG}
\mathrm{DCG}(\vy, \hat\pi)=\sum_{j=1}^{L} \frac{2^{y_{\hat\pi_j}}-1}{\log _{2}(1+j)}
 \end{align}

A more common variant NDCG normalizes DCG by the maximum possible DCG attained via the optimal ranking $\pi^\ast$ (obtained by sorting $\vy$ in descending order):
 \begin{align}
 \label{eq:NDCG}
\mathrm{NDCG}(\vy, \hat\pi)=\frac{\mathrm{DCG}( \vy, \hat\pi)}{\mathrm{DCG}\left(\vy, \pi^{*}\right)} 
\end{align}
Higher DCG and NDCG signify better performance. 
Their truncated versions DCG@$k$ and NDCG@$k$ are defined by replacing $L$ with a cutoff $k$ in Eq.~\ref{eq:DCG} so metrics are computed on the top-$k$ items.

\section{Scalable and Differentiable Top-$k$ Ranking via \name{}}\label{sec:technical}

In \name{}, we seek to design a new class of surrogate objectives for ranking that address two key challenges with current LTR approaches.
The first challenge is the 
gap between the downstream ranking metric of interest (e.g., NDCG, ARP) that involve a non-differentiable sorting operator and the differentiable surrogate function being optimized.
The second challenge concerns the scalability w.r.t. the size of the candidate list $L$ for each query item.
Larger list sizes are standard in industrial applications but present computational and memory challenges for current approaches during both training and test-time inference.
Pairwise and listwise methods (or hybrids) typically scale quadratically in the list size $L$, the number of items to rank for each query.
Combining surrogates for truncated metrics, such as LambdaRank in Eq.~\ref{eq:lambdarank} with NDCG@$k$ has a reduced complexity of $O(kL)$ but comes at the cost of vanishing gradient signal from relevant entries below $k$ (see Figure~\ref{fig:gradients} for an illustration).
Soft versions of the truncation metrics, such as Approximate NDCG Loss~\cite{qin2010general}, can 
learn from all items but again scale quadratically with $L$ or do not take advantage of GPU acceleration\cite{blondel2020fast}.

\begin{figure}
    \centering
    \includegraphics[width=.38\textwidth]{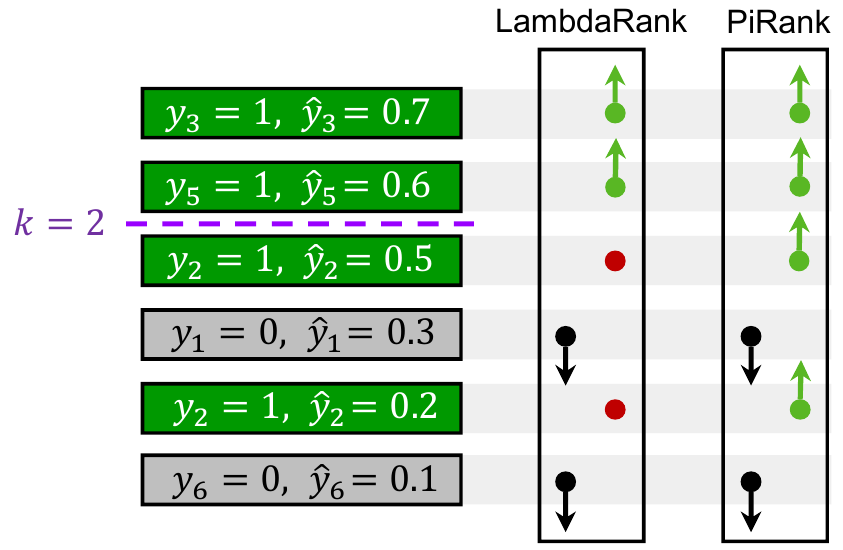}
    \caption{A set of items, green if relevant and gray otherwise, sorted by their score. Arrows show the sign of the loss derivative w.r.t. each item's predicted score for different methods (positive for black, negative for green and zero for red dots). Pairwise approaches weighted by differences in truncated ranking metrics, such as LambdaRank with NDCG@$k$, would put zero weights on the relevant items ranked below $k=2$, thus bypassing learning signal. In comparison, PiRank efficiently learns from all items even using a $k=2$ truncated loss.
    }
    \label{fig:gradients}
\end{figure}


As defined previously, a ranking $\pi$ is a list of indices equivalent to a permutation of $\{1, 2, \dots, L\}$.
The set of possible rankings can thus be seen as the symmetric group $\mathcal{S}_L$, of size $L!$.
Every permutation $\pi$ can be equivalently represented as a permutation matrix $P_\pi$, an $L \times L$ matrix such that its $(i, \pi_i)$-th entry is $1$ for all $i \in \{1,2,\cdots,L\}$ and the remaining entries are all $0$.
We define the sorting operator $\sort: \mathbb{R}^L \rightarrow \mathcal{S}_L$ as a map of an $L$-dimensional input vector to the permutation that corresponds to the descending ordering of the vector components.
Prior work in relaxing the sorting operator is based on relaxation of its output, either in the form of rankings~\cite{taylor2008softrank,chapelle2010gradient,blondel2020fast}, or permutation matrices~\cite{grover2019stochastic,adams2011ranking,Mena2018,Cuturi2019}. 
PiRank first applies the latter kind of relaxations to the ranking problem by introducing a new class of relaxed LTR metrics, then introduces a new relaxation that is particularly suited to these metrics.

\subsection{Relaxed Ranking Metrics}
\label{sec:relaxndcg}

We denote an LTR model by $f_\theta$ (e.g., deep neural network) with parameters $\theta$.
The model outputs a vector of $L$ scores $\hat{\vy}=f_\theta(q, \vx_1, \dots, \vx_L)$ for a query $q$ and $L$ candidate elements $\{\vx_i\}_{i=1}^L$.
We first consider the NDCG target metric.
In Eq.~\ref{eq:NDCG}, 
the numerator $\mathrm{DCG}(\vy, \hat\pi)$ involves computing $\hat\pi= \mathrm{sort}(\hat{\vy})$
which is non-differentiable w.r.t. $\theta$.
Let $\vg$ denote the column vector of graded relevance scores such that $g_j = 2^{y_j}-1$.
We can then rewrite $\mathrm{DCG}(\vy, \hat\pi)$ as:

 \begin{align}
 \label{eq:DCG_sim}
\mathrm{DCG}(\vy, \hat\pi)&= \sum_{j=1}^{L} \frac{g_{\hat\pi_j}}{\log_2(1+j)}
=\sum_{j=1}^{L}\frac{[P_{\hat\pi} \vg]_j}{\log_{2}(1+j)}.
 \end{align}

To obtain the DCG@$k$ objective, one can replace $L$ with $k$ in the sum.
We omit the suffix @$k$ in the following, assuming that $k$ has been defined, potentially equal to $L$ which would yield the full metric.

Let $\widehat{P}_{\sort{}(\mathbf{s})}(\tau)$ denote a relaxation to the permutation matrix $P_{\sort{}(\mathbf{s})}$ that can be used for differentiable sorting of an input score vector $\mathbf{s}$, for some temperature parameter $\tau > 0$ such that the true matrix is recovered as $\tau \to 0^+$.
Since $\hat\pi= \mathrm{sort}(\hat{\vy})$, we can obtain a differentiable relaxation to $\mathrm{DCG}(\vy,\hat\pi)$:
 
  \begin{align}
 \label{eq:DCG_soft}
\widehat{\mathrm{DCG}}(\vy, \hat{\vy}, \tau) &=\sum_{j=1}^{k}\frac{[\widehat{P}_{\sort{}(\hat{\vy})} (\tau)\vg]_j}{\log_{2}(1+j)}.
 \end{align}

\begin{figure}
    \centering
    \includegraphics[width=0.50\textwidth]{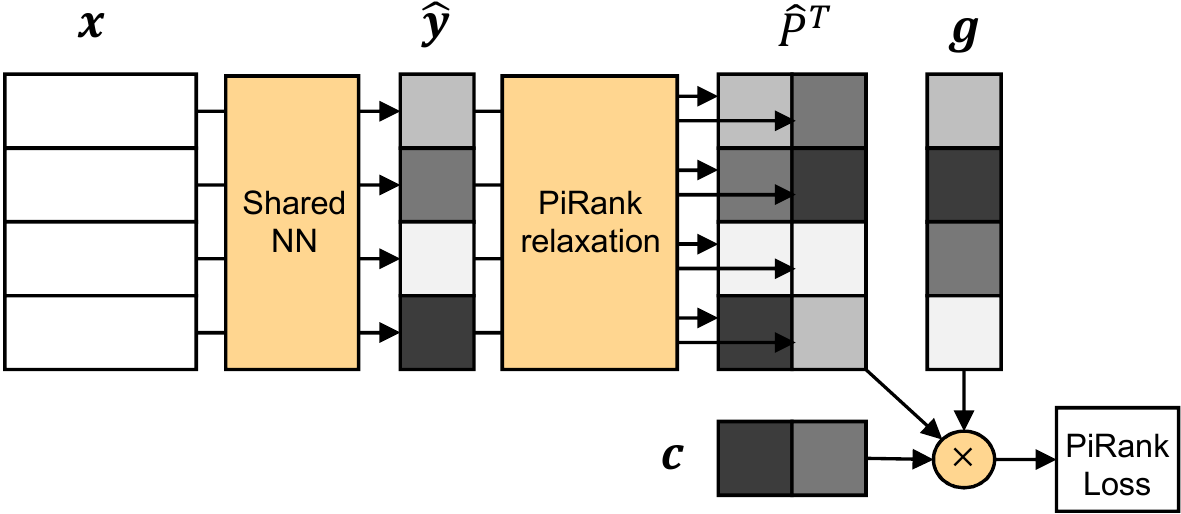}
    \caption{Architecture for the computation of the PiRank relaxed NDCG@$k$ loss for $L = 4$ and $k=2$. Square cells represent scalars with darker shades indicating higher values. The fourth item has currently the highest score as given by the neural network but the second item has the highest relevance.
    The vector $\bm{c}$, with components $c_j = 1 / \log(1 + j)$, discounts gains $\bm{g}$ based on rankings.}
    \label{fig:loss}
\end{figure}

 Substituting this in the expression for NDCG in Eq.~\ref{eq:NDCG}, we obtain the following relaxation for NDCG:
\begin{align}\label{eq:neuralrank}
\widehat{\mathrm{NDCG}}(\vy, \hat{\vy}, \tau)=\frac{\widehat{\mathrm{DCG}}(\vy, \hat{\vy}, \tau)}{\mathrm{DCG}(\vy, \pi^\ast)}
\end{align}
where the normalization in the denominator does not depend on $\theta$ and can be computed exactly via regular sorting.
Finally, we define the \name{} surrogate loss for NDCG as follows:
\begin{align}
\label{eq:neuralrank_loss}
    \ell_{\mathrm{PiRank-NDCG}} = 1 - \widehat{\mathrm{NDCG}}(\vy, \hat{\vy}, \tau)
\end{align}
which is bounded between 0 and 1 as is NDCG, and whose difference with the actual $(1 - \text{NDCG})$ gets negligible as $\tau \to 0^+$.
Figure \ref{fig:loss} illustrates the model architecture for the above objective. 
Similarly, we can derive a surrogate loss for the ARP metric in Eq.~\ref{eq:ranked_precision} as:
\begin{align}\label{eq:neuralrank_rp}
\hat{\ell}_{\mathrm{PiRank-ARP}}(\boldsymbol{y}, \hat{\vy}, \tau)=\frac{\sum_{j=1}^{k}[\widehat{P}_{\sort{}(\hat{\vy})} (\tau)\vy]_j j}{\sum_{j=1}^{k} y_{j}}.
\end{align}

\subsection{Example: Differentiability via NeuralSort}
\label{sec:algo}

Typically, relaxations to permutation matrices consider the Birkhoff polytope of doubly stochastic matrices. 
A doubly-stochastic matrix is a square matrix with entries in $[0,1]$ where every row and column sum to $1$.
In contrast, NeuralSort\cite{grover2019stochastic} is a recently proposed relaxation of permutation matrices in the space of \textit{unimodal} row-stochastic matrices. 
A unimodal matrix is a square matrix with entries in $[0,1]$ such that the entries in every row sum to 1 (i.e. row-stochastic), but additionally enforce the constraint that the maximizing entry in every row should have a unique column index.
See Figure~\ref{fig:ds_uni} for an example of each type.
Note that a unimodal matrix is not necessarily doubly-stochastic and vice versa.
Permutation matrices are both doubly-stochastic and unimodal.

\begin{figure}
\small
\centering
\begin{subfigure}[b]{.4\columnwidth}
\centering
\begin{tikzpicture}
\matrix[matrix of math nodes,
        left delimiter=(,
        right delimiter=),
        nodes in empty cells] (m)
{
0       & \textbf{0.9} & 0.1  \\
\textbf{0.5} & 0.01  &  0.49     \\
\textbf{0.5} & 0.09  & 0.41     \\
};
\end{tikzpicture}
 \end{subfigure}
\begin{subfigure}[b]{.4\columnwidth}
\centering
\begin{tikzpicture}
\matrix[matrix of math nodes,
        left delimiter=(,
        right delimiter=), 
        nodes in empty cells] (m)
{
\textbf{0.8} & 0.2  &  0   \\
0.2       & 0.3 & \textbf{0.5}  \\
0.25 & \textbf{0.6}   & 0.15    \\
};
\end{tikzpicture}
 \end{subfigure}
 \caption{Doubly-stochastic (left) vs. unimodal (right) matrices. Maximum entry in every row in \textbf{bold}. Unlike unimodal matrices, two different items can have the same assignment of most-likely ranks (column indices) for doubly-stochastic matrix relaxations.
 \vspace{-0.25in}
 } \label{fig:ds_uni}

\end{figure}

 In NeuralSort~\cite{grover2019stochastic}, a unimodal relaxation of the permutation matrix $P_{\sort{}(\hat \vy)}$ can be defined as follows.
Let $A_{\hat\vy}$ denote the matrix of absolute pairwise score differences with $i, j$-th entry given as $[A_{\hat\vy}]_{ij} = \vert \hat y_i - \hat y_j \vert$.
Then, the $i$-th row of the relaxed permutation matrix is:
\begin{align}\label{eq:relaxed_perm}
\widehat{P}^{\,(NS)}_{\sort{}(\hat\vy)}(\tau)_{i,\cdot} = \mathrm{softmax} 	\left[((L+1-2i)\hat\vy - A_{\hat\vy} \mathds{1})/\tau\right]
\end{align}
where $\mathds{1}$ is the vector with all components equal to 1.
Its unimodal property makes it particularly well-suited to extracting top-$k$ items because, as seen in Figure~\ref{fig:ds_uni}, taking the maximizing elements of the first $k$ rows yields exactly $k$ items but may yield less in the case of a doubly-stochastic relaxation.
However, the complexity to obtain the top-$k$ rows in this formulation, even for $k$ as low as 1, is quadratic in $L$
as the full computation of $A_{\hat\vy} \mathds{1}$ is required for the softmax operation in Eq.~\ref{eq:relaxed_perm}.
This is prohibitive when $L \gg k$, a common scenario, and motivates the introduction of a new relaxation with a more favorable complexity for top-$k$ ranking.

\subsection{Scaling via Divide-And-Conquer} 

Our \name{} losses only require the first $k$ rows of the relaxed permutation matrix $\widehat{P}_{\sort{}(\hat{\vy})}$.
This is specific to the LTR setting in which only the top-ranked items are of interest, in contrast to the full sorting problem that requires the full matrix.
In \name{}, we leverage this insight to construct a divide-and-conquer variant of differentiable sorting relaxations such as NeuralSort to reduce the complexity of the metric computation.
 Our proposed construction can be viewed as a relaxed and truncated multi-way merge sort algorithm with differentiable sorting relaxations as building blocks.
 In the following discussion, we use NeuralSort as our running example while noting that the analysis extends more generally to other differentiable relaxations as well.

\paragraph{Data Structure Construction.}
Let $L = b_1 b_2 \cdots b_d$ be a factorization of the list size $L$ into $d$ positive integers.
Using this factorization, we will construct a tree of depth $d$ with branching factor $b_j$ at height $j$. 
Next, we split the $L$-dimensional score vector $\hat \vy$ into its $L$ constituent scalar values.
We set these values as the leaves of the tree.
See Figure~\ref{fig:tree} for an example.
At every other level of the tree, we will merge values from the level below into equi-sized lists.
Let $\{k_j\}_{j=0}^d$ be sizes for the intermediate results at level $j$, such that $k_0=1$ (leaves) and $\min(k, k_{j-1} b_j) \leq k_j \leq k_{j-1} b_j$ for $j\geq 1$ (explained below).
Then, in an iterative manner for levels $j=1, \dots, d$,
the value of a node at height $j$ are the top-$k_j$ scores given by the application of the NeuralSort operator on the concatenation of the values of its children.
With $k_d=k$, the root value thus obtained is a relaxation of the top-$k$ scores in $\hat \vy$.
The top-$k$ rows of the relaxed permutation matrix $\widehat{P}_{\sort{}(\hat{\vy})}$ yielding these scores are constructed by compounding the operations at each iteration.

\begin{figure}
    \centering
    \includegraphics[width=0.45\textwidth]{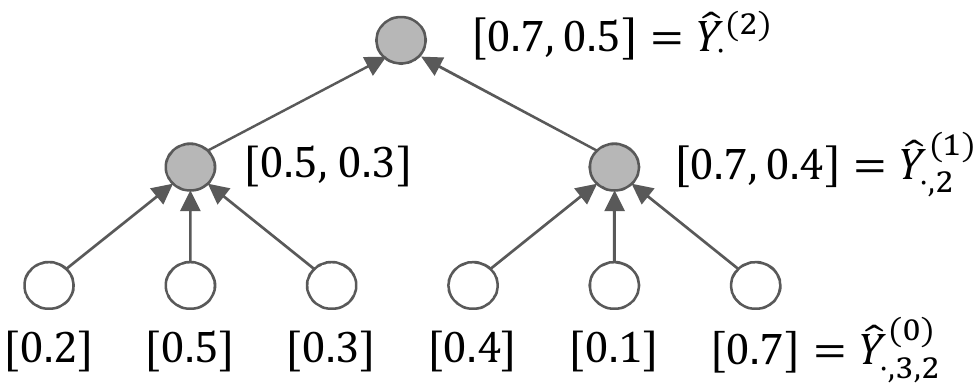}
    \caption{Divide-and-conquer strategy for $L= 6 =3\cdot 2$, $k = 2$ and $\hat\vy^T = (0.2, 0.5, 0.3, 0.4, 0.1, 0.7)$.
    The scores are merged in groups of size $b_1 = 3$ and the respective top $k_1=2$ scores are kept, then the $b_2=2$ outputs are merged to obtain the final top $k_2=k=2$ scores.
    The effect of relaxation is not shown for readability.
    At non-zero temperature, the values at non-terminal nodes would be linear combination of the scores.
    \vspace{-0.15in}
    }
    \label{fig:tree}
    
\end{figure}

\paragraph{Computational Complexity.} The intuition behind the favorable scaling is as follows.
At step $j$, NeuralSort is applied on blocks of size $k_{j-1} b_j$ as it merges $b_j$ sub-blocks of size $k_{j-1}$.
Obtaining the full sorted list of scores would require to keep all intermediate scores during the process, i.e., $k_j^{\textrm{(max)}} = b_1 \cdots b_j = k_{j-1} b_j$
for $j \geq 1$.
In the last step, the NeuralSort operator is applied on a list of size $k_{d-1} b_d$, equal to $L$ in this case, so the overall complexity would be at least quadratic in $L$ as explained previously.
However, since only the top-$k$ scores are desired, intermediate outputs can be truncated if larger than $k$.
Full truncation corresponds to $k_j^{\textrm{(min)}} = \min(k, k_{j-1} b_j)$.
Any choice $k_j^{\textrm{(min)}} \leq k_j \leq k_j^{\textrm{(max)}}$ is acceptable to recover the top-$k$ scores, with larger values allowing more information to flow at the expense of a higher complexity.
Choosing $b_j \approx L^{1/d}$ and $k_j$ minimal,
the list sizes $b_j k_{j-1}$ on which NeuralSort is applied at each step can thus be of the order of $L^{1/d}k$, much smaller than $L$ in the $d > 1$ and $k \ll L$ scenario.

Formally, 
let $\tau_1, \tau_2, \dots, \tau_d$ be the relaxation temperatures at each height, with $\tau_d = \tau$ and $\tau_j \leq \tau_{j+1}$ for $j \in \{1, \dots, d - 1\}$.
Define the tensor $\hat Y^{(0)}$ by reshaping $\hat{\vy}$ to shape $(k_0, b_1, b_2, \dots, b_d)$, yielding components
\begin{align}
    \hat Y^{(0)}_{1, i_1, i_2, \dots, i_d} = \hat y_{1+\sum_{j=1}^d (i_j-1) \prod_{l=1}^{j-1} b_l},
\end{align}
with $i_j \in \{1, \dots, b_j\}$ and the first index is always 1 as $k_0 = 1$.
With the tree representation, the first tensor index is the position in the node value vector and the rest of the indices identify the node by the index of each branching starting from the root. 
For $j \in \{1, 2, \dots, d\}$, recursively define the tensors $\hat Q^{(j)}$, $\hat Y^{(j)}$ and $\hat P^{(j)}$ of respective shapes $(k_j, k_{j-1}, b_j, \dots, b_d)$, $(k_j, b_{j+1}, \dots, b_d)$ and $(k_j, b_1, \dots, b_d)$ with components
\begin{equation}
\begin{aligned}
    \hat Q^{(j)}_{l, m, i_j, \dots, i_d} &= \operatorname{softmax}\Bigl[\Bigl((k_{j-1} b_j + 1 - 2l) \hat Y^{(j-1)}_{m, i_j, \dots, i_d} \bigr.\bigr. 
    - \Bigl.\Bigl.
    \sum_{p=1}^{k_{j-1}} &\sum_{q=1}^{b_j} \left|\hat Y^{(j-1)}_{m, i_j, i_{j+1}, \dots, i_d} - \hat Y^{(j-1)}_{p, q, i_{j+1}, \dots, i_d} \right| \Bigr) / \tau_j \Bigr],
 \end{aligned}
\end{equation}
\begin{equation}
\begin{aligned}
    \hat Y^{(j)}_{l, i_{j+1}, \dots, i_d} &= \sum_{p=1}^{k_{j-1}} \sum_{q=1}^{b_j} \hat Q^{(j)}_{l, p, q, i_{j+1}, \dots, i_d} \hat Y^{(j-1)}_{p, q, i_{j+1}, \dots, i_d},
 \end{aligned}
\end{equation}
\begin{equation}
\begin{aligned}
    \hat P^{(j)}_{l, i_1, \dots, i_d} &= \sum_{m=1}^{k_{j-1}} \hat Q^{(j)}_{l, m, i_j, \dots, i_d} \hat P^{(j-1)}_{m, i_1, i_2, \dots, i_d},
\end{aligned}
\end{equation}
with $\hat P^{(0)} = 1$.
Intuitively, $\hat Y^{(j)}$ holds the relaxed top-$k_j$ scores at height $j$ and $\hat Y^{(d)}$ is the desired top-$k$ score vector.
The interpretation of the indices in the tree structure is as for $\hat Y^{(0)}$, illustrated in Figure~\ref{fig:tree}.
More importantly, we keep track of the relaxed sorting operation that yielded this output.
$\hat Q^{(j)}$ is the relaxed permutation matrix obtained by applying NeuralSort in Eq.~\ref{eq:relaxed_perm} to $\hat Y^{(j)}$, while $\hat P^{(j)}$ compounds the relaxed permutation matrices obtained so far so it always maps from the initial list size.
 Finally, define the $k \times L$ matrix $\hat P$ by reshaping the tensor $\hat P^{(d)}$, yielding components
\begin{align}
    \hat P_{l, 1 + \sum_{j=1}^d (i_j - 1) \prod_{l=1}^{j-1} b_l} = \hat P^{(d)}_{l, i_1, \dots, i_d},
\end{align}
for $i_j \in \{1, \dots, b_j\}$.
The $k$ rows of $\hat P$ are used as the top-$k$ rows of the relaxed sorting operator $\widehat{P}_{\sort{}(\hat \vy)}(\tau)$.
This approach is equivalent to NeuralSort, yielding Eq.~\ref{eq:relaxed_perm} for $d=1$.
Proof of convergence for $\tau \to 0^+$ of this relaxation in the general case $d\geq 1$ is presented in Appendix~\ref{sec:proof}.

In the simple case where $L=b^d$ and we set $b_j = b$, $k_j=\min(k, b^j)$ \s{this doesn't seem consistent with the previous definition of kj?}
\bc{maybe clearer now}
for all $j\in\{1, \dots, d\}$, the complexity to compute $\hat P$ and thus the \name{} losses is then $O(L^{1+1/d} + (d-1)k^2L)$, which scales favorably in $L$ if $d>1$ and $k = O(1)$.
In the general case, the score list can be padded, e.g. to the power of 2 following $L$, such that the previous complexity holds for $b=2$ and $d=\lceil \log_2 L \rceil$, but other factorizations may yield lower complexity depending on $L$.

\section{Experiments}

\label{sec:experiments}

We present two sets of experiments in this section: (a) benchmark evaluation comparing \name{} with other ranking based approaches on publicly available large-scale benchmark LTR datasets, and (b) ablation experiments for the design choices in \name{}.

\subsection{Benchmark Evaluation via TF-Ranking}
\label{sec:benchmark}

\begin{table*}[t]
\centering
\scriptsize
\begin{tabular}{|l|llllllll|}
\hline
Loss / Metric & \multicolumn{1}{c}{{OPA}} & \multicolumn{1}{c}{{ARP}} & \multicolumn{1}{c}{{MRR}} & \multicolumn{1}{c}{{NDCG@1}} & \multicolumn{1}{c}{{NDCG@3}} & \multicolumn{1}{c}{{NDCG@5}} & \multicolumn{1}{c}{{NDCG@10}} & \multicolumn{1}{c|}{{NDCG@15}} \\ \hline
RankNet                                  & 0.611494                & 46.746979               & 0.786148                & 0.331595                   & 0.336593                   & 0.346928                   & 0.375944                    & 0.398582                    \\
LambdaRank                               & 0.618954                & 46.174503               & 0.798169                & 0.392150                   & 0.396045                   & 0.404275                   & 0.425611                    & 0.444942                    \\
Softmax                                  & 0.612626                & 46.557617               & 0.761750                & 0.331527                   & 0.338999                   & 0.353011                   & 0.381717                    & 0.405312                    \\
Approx. NDCG                             & 0.630616                & 45.461678               & \textbf{0.814873}       & \textbf{0.423497}          & 0.409272                   & 0.414501                   & 0.434463                    & 0.453627                    \\
\hline
NeuralSort                               & \textbf{0.635468}       & \textbf{44.966999}      & 0.779865                & 0.373344                   & 0.386647                   & 0.402052                   & 0.430580                    & 0.452863                    \\
PiRank-NDCG                              & 0.629763                & 45.394020               & \textbf{0.813016}       & \textbf{0.425006}          & \textbf{0.420569}          & \textbf{0.426034}          & \textbf{0.446428}           & \textbf{0.465309}           \\
\hline
\end{tabular}
\\[1ex]
\scriptsize
\begin{tabular}{|l|llllllll|}
\hline
Loss / Metric & \multicolumn{1}{c}{{OPA}} & \multicolumn{1}{c}{{ARP}} & \multicolumn{1}{c}{{MRR}} & \multicolumn{1}{c}{{NDCG@1}} & \multicolumn{1}{c}{{NDCG@3}} & \multicolumn{1}{c}{{NDCG@5}} & \multicolumn{1}{c}{{NDCG@10}} & \multicolumn{1}{c|}{{NDCG@15}} \\ \hline
RankNet                                  & 0.625170                & 10.514806               & 0.889641                & 0.599739                   & 0.622155                   & 0.650367                   & 0.704332                    & 0.733015                    \\
LambdaRank                               & 0.636126                & 10.451870               & 0.896020                & 0.633181                   & 0.652220                   & 0.676243                   & 0.723489                    & 0.749001                    \\
Softmax                                  & 0.627313                & 10.472106               & 0.886976                & 0.588957                   & 0.618409                   & 0.648358                   & 0.703746                    & 0.732624                    \\
Approx. NDCG                             & 0.648793                & 10.277598               & \textbf{0.903356}       & \textbf{0.668700}          & \textbf{0.670107}          & 0.690353                   & 0.735641                    & 0.760539                    \\
\hline
NeuralSort                               & 0.648416                & 10.320462               & 0.898246                & 0.640018                   & 0.655990                   & 0.681075                   & 0.729225                    & 0.754867                    \\
PiRank-NDCG                              & \textbf{0.654255}       & \textbf{10.250481}      & \textbf{0.902088}       & 0.661661                   & \textbf{0.672390}          & \textbf{0.693825}          & \textbf{0.738479}           & \textbf{0.763864} \\
\hline
\end{tabular}
\caption{Benchmark evaluation on (upper) MSLR-WEB30K and (lower) Yahoo! C14 test sets.  In bold, the best performing method and all other methods not significantly worse.}
\label{tab:benchmarks}
\end{table*}

\paragraph{Datasets.}
To empirically test PiRank, we consider two of the largest open-source benchmarks for LTR:  the MSLR-WEB30K\footnote{\label{note:mslr}https://www.microsoft.com/en-us/research/project/mslr/} and the Yahoo! LTR dataset C14\footnote{https://webscope.sandbox.yahoo.com}.
Both datasets have relevance scores on a 5-point scale of 0 to 4, with 0 denoting complete irrelevance and 4 denoting perfect relevance.
We give extensive details on the datasets and experimental protocol in Appendix~\ref{sec:datasets}.

\paragraph{Baselines.}
We focus on neural network-based approaches and use the open-source TensorFlow Ranking (TFR) framework~\cite{pasumarthi2019tf}.
TFR consists of high-quality GPU-friendly implementations of several LTR approaches, common evaluation metrics, and standard data loading formats.
We compare our proposed loss, \name{}-NCDG, with the following baselines provided by TensorFlow Ranking: Approximate NDCG Loss~\cite{qin2010general}, Pairwise Logistic Loss (RankNet, Eq.~\ref{eq:ranknet}), Pairwise Logistic Loss with lambda-NDCG weights (LambdaRank, Eq.~\ref{eq:lambdarank}), and the Softmax Loss.
We also include NeuralSort, whose loss is the cross-entropy of the predicted permutation matrix.
Of these methods, the Pairwise Logistic Loss (RankNet) is a pairwise approach while the others are listwise.
While our scope is on differentiable ranking surrogate losses for training neural networks, other methods such as the tree-based LambdaMART \cite{Burges2010} could potentially yield better results.

\paragraph{Setup.} All approaches use the same 3-layer fully connected network architecture with ReLU activations to compute the scores $\hat{\boldsymbol{y}}$ for all (query, item) pairs, trained on 100,000 iterations.
The maximum list size for each group of items to score and rank is fixed to 200, for both training and testing.
Further experimental details are deferred to Appendix~\ref{sec:exp_details}.
We evaluate Ordered Pair Accuracy (OPA), Average Relevance Position (ARP), Mean Reciprocal Rank (MRR), and NDCG@$k$ with $k \in \{1, 3, 5, 10, 15\}$.
We determine significance similarly to~\cite{Reddi2021RankDistilKD}.
For each metric and dataset, the best performing method is determined, then a one-sided paired t-test at a 95\% significance level is performed on query-level metrics on the test set to compare the best method with every other method.

\paragraph{Results.}

Our results are shown in Table~\ref{tab:benchmarks}.
Overall, \name{} shows similar or better performance than the baselines on 13 out of 16 metrics.
It significantly outperform all baselines on NDCG@$k$ for $k \geq 5$ while Approx. NDCG is competitive on NDCG@$k$ for $k \leq 3$ and MRR.
\name{} is also significantly superior on OPA and ARP metrics on Yahoo! C14 while NeuralSort is superior for MSLR-WEB30K.

\subsection{Ablation Experiments}
\label{sec:ablation}

\begin{figure}[t]
    \centering
    \includegraphics[width=0.60\textwidth,trim={0.2cm 0.4cm 0.2cm 0.3cm}, clip]{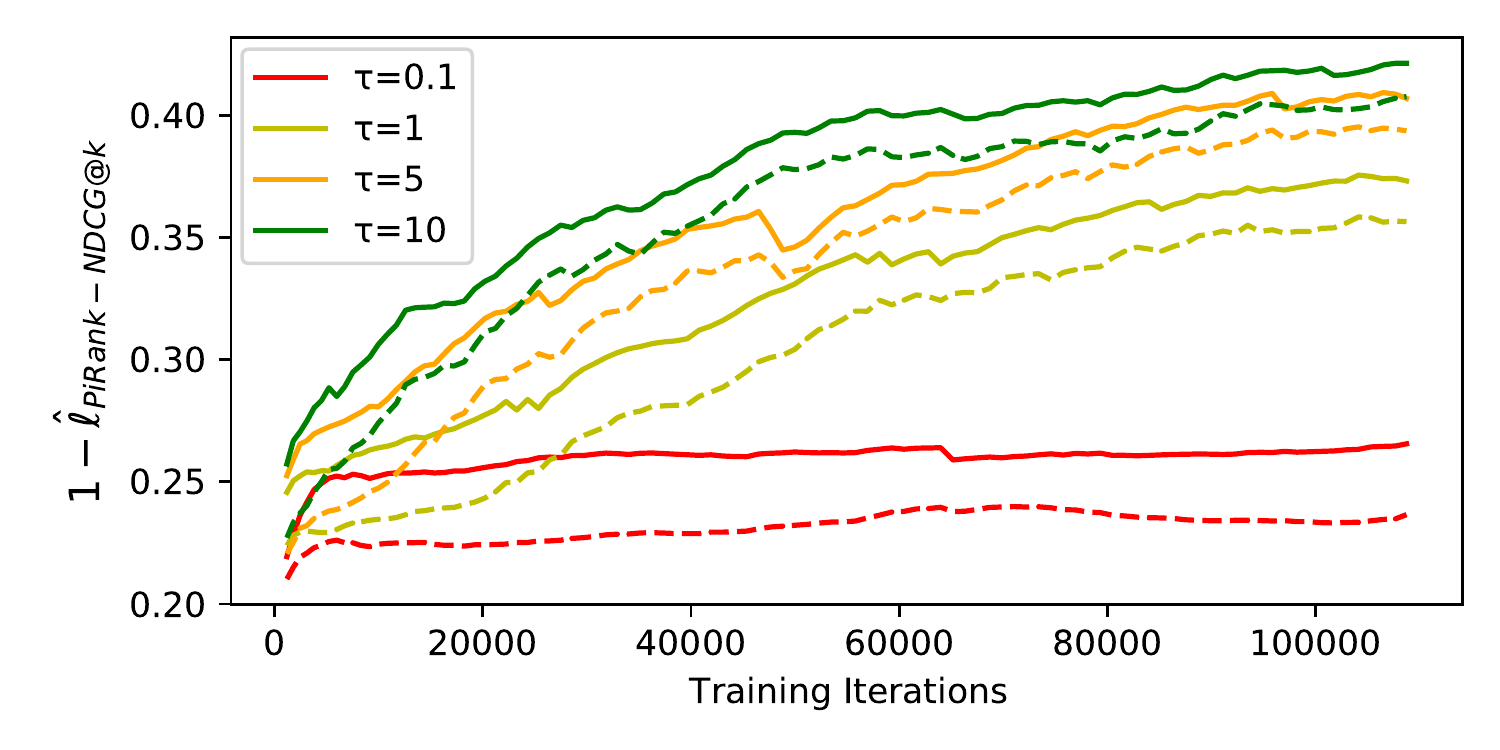}
    \caption{$1 -\hat{\ell}_\mathrm{\name{}-NDCG@k}$ ($k=10$, full lines) for different values of the temperature parameter $\tau$, with the corresponding value of the hard metric NDCG@10 (dashed lines), at validation.
    }
    \label{fig:temperature}
\end{figure}

\paragraph{Temperature.}

The temperature hyperparameter $\tau$ is used in \name{} to control the degree of relaxation.
We experiment on several values ($\tau \in \{0.1, 1, 5, 10\}$)
using the MSLR-WEB30K dataset and the experimental settings for ablation provided in Appendix~\ref{sec:exp_details}.
Figure~\ref{fig:temperature} demonstrates the importance of correctly tuning $\tau$.
High values ($\tau > 1$) speed up training, especially in the early regime, while low values induce large gradient norms which are unsuitable for training and lead to the loss stalling or even diverging.
Another observation is that the relaxed metric $1 - \hat{\ell}_{\mathrm{\name{}-NDCG@}k}$ closely follows the value of the downstream metric $\text{NDCG@}k$ as expected.
We further experimented with a very high temperature value and an exponentially decreasing annealing schedule in Appendix~\ref{sec:size_exps}.

\begin{figure}[t]
    \centering
    \includegraphics[width=0.55\textwidth,trim={0.2cm 0.2cm 0.2cm 0.2cm}, clip]{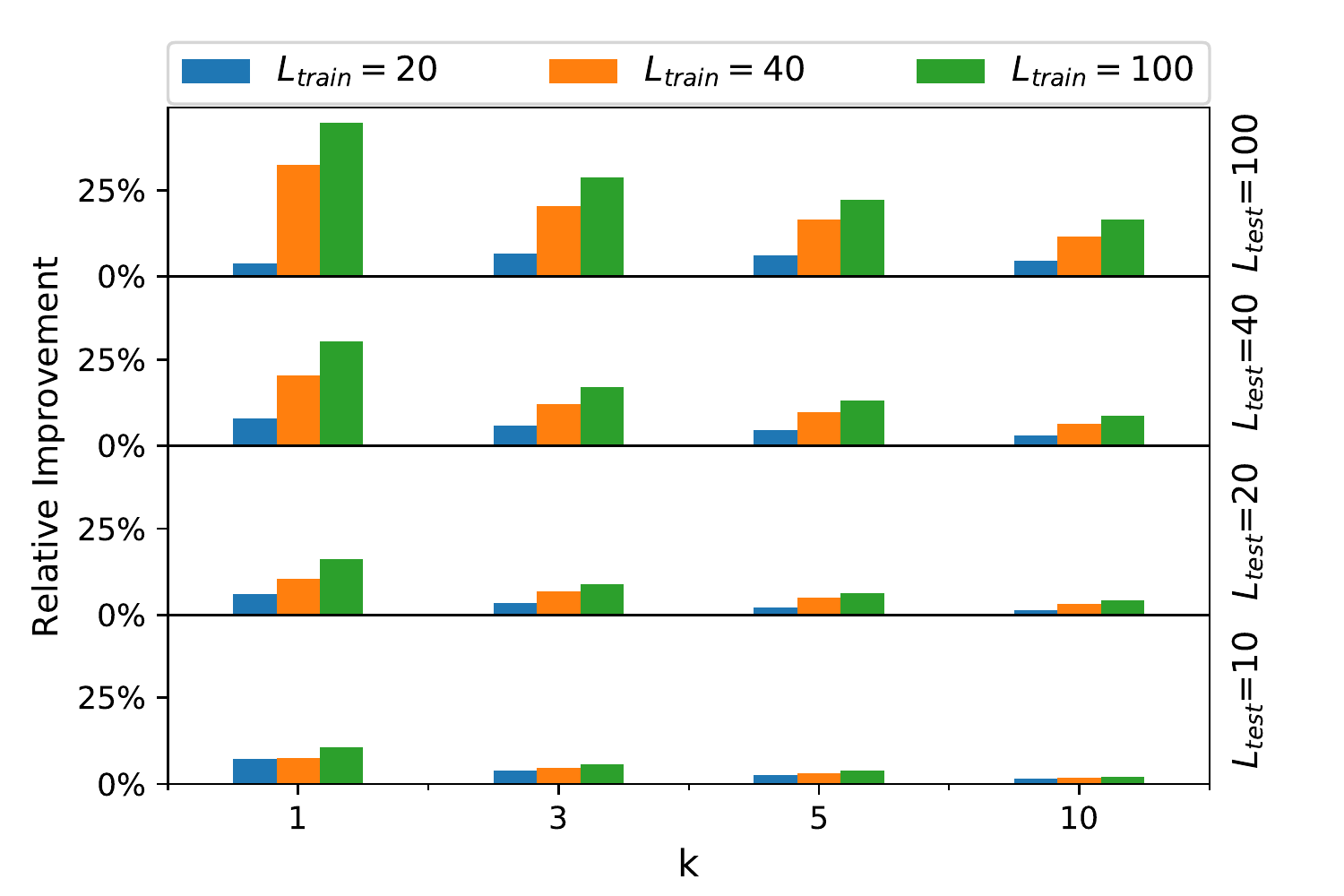}
    \caption{Relative improvement of NDCG@$k$ on different values of $L_{test}$, for different $L_{train}$ values vs. a baseline of $L_{train}=10$.}
    \label{fig:sizes}
\end{figure}

\paragraph{Training List Size.}

The training list size parameter $L_{train}$ determines the number of items to rank for a query during training.
The setting is the same as for the temperature ablation experiment, but with training list sizes $L_{train} \in \{10, 20, 40, 100\}$ which we then evaluate on testing list sizes in the same range of values $L_{test} \in \{10, 20, 40, 100\}$. The dataset is again MSLR-WEB30K. 
Figure~\ref{fig:sizes} exposes four patterns for NDCG@$k$.
First, for a fixed $L_{test}$ and $k$, a larger $L_{train}$ is always better.
Second, for a fixed $L_{test}$, we observe diminishing returns along $k$, as relative improvements decrease for all $L_{train}$.
This observation is confounded by NDCG@$k$ values growing larger with $k$, but the metric is always able to distinguish between ranking functions \cite{Wang2013}.
Third, for a fixed $k$, our returns along $L_{test}$ increase with $L_{train}$ (except for $L_{train}=20$ and $k=1$). This means that the need for a larger $L_{train}$ is more pronounced for larger values of $L_{test}$.
Fourth and last, the returns increase most dramatically with $L_{train}$ when $L_{test} \gg k$ (top left), a common industrial setting. 
Values for NDCG@$k$, MRR, OPA, ARP are provided in Appendix~\ref{sec:size_exps}. For MRR, using a larger $L_{train}$ is always beneficial regardless of $L_{test}$, but not always for OPA and ARP. 

\label{sec:depthablation}

 \begin{figure}[t]
    \centering
    \includegraphics[width=0.55\textwidth,trim={0.3cm 0.4cm 0.2cm 0}, clip]{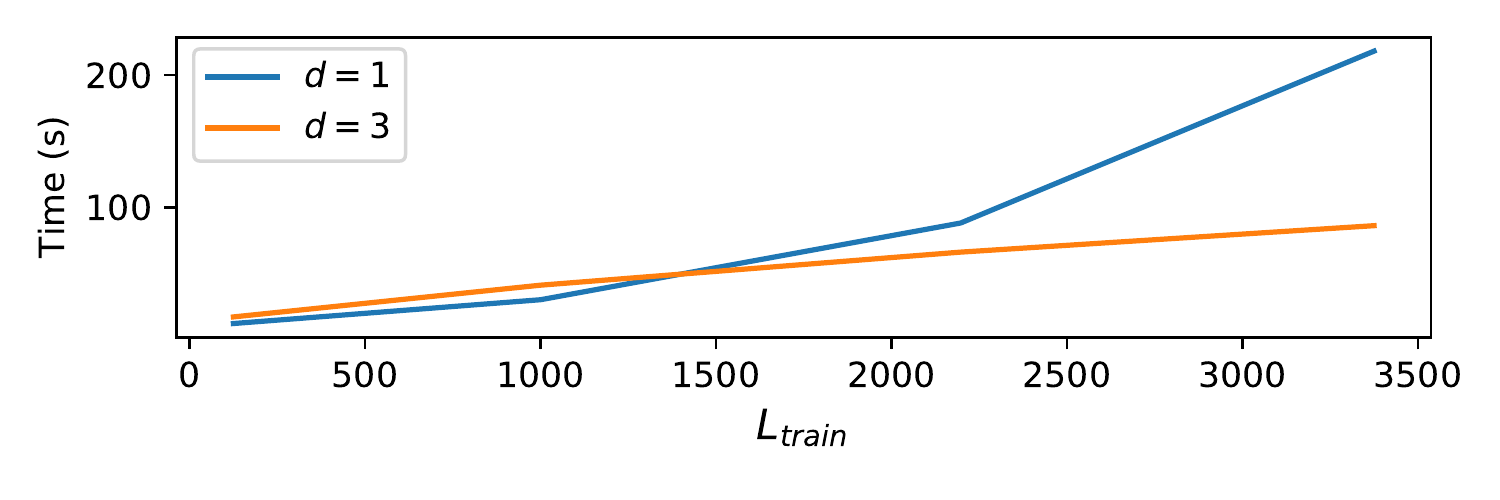}
    \caption{Wall-clock time for 100 training steps, each corresponding to 16 queries, for different $L_{train}$ and maximal depth $d$. 
    We use $k = 1$ and $L_{train}=5^3, 10^3, 13^3$ and $15^3$ such that $L_{train}=b^3$ for $d=3$ and $k_j=1$. Variation across runs is too small ($\sim$1s) and omitted for brevity.}
    \label{fig:runtime}
\end{figure}

\paragraph{Depth.}
A main advantage of the \name{} loss is how it can scale to very large training list sizes $L_{train}$.
This setting is difficult to come across with traditional LTR datasets, which are manually annotated, but occurs frequently in practice.
One example is when the relevance labels are obtained from implicit signals such as clicks or purchases in recommendation systems.
In this case, an LTR model is used to re-rank a list of candidates generated by another, simpler, model choosing among all possible items those potentially relevant to a query or context.
An LTR model capable of handling very large lists can reduce the impact of errors made by the simpler candidate generation step, moving to the top an item lowly ranked at first that would have been cut off from a smaller list.
To test the complexity shown in Section~\ref{sec:algo} in extreme conditions, we create a synthetic dataset as described in Appendix~\ref{sec:synthetic}.
Figure~\ref{fig:runtime} shows how the training time for depth $d=3$ scales much more favorably than for $d=1$, following their respective time complexities of $O(L^{1+1/3})$ and $O(L^2)$.

\section{Summary and Limitations}

\label{sec:conclusion}

We proposed \name{}, a novel class of surrogate loss functions for Learning-To-Rank (LTR) that leverages a continuous, temperature-controlled relaxation to the sorting operator \cite{grover2019stochastic} as a building block. 
This allows us to recover exact expressions of the commonly used non-differentiable ranking metrics in the limit of zero temperature, which we proved in particular for the NDCG metric.
Crucially, we proposed a construction inspired by the merge-sort algorithm that permits \name{} to scale to very large lists.

In our experiments on the largest publicly available LTR datasets, we observed that \name{} has superior or similar performance with competing methods on the MSLR-WEB30K and Yahoo! C14 benchmarks on 13/16 ranking metrics and their variants. 

As future work, we would like to explore other recent relaxations of the sorting operator~\cite{blondel2020fast,prillo2020softsort,xie2020differentiable} as a building block for the PiRank framework.
Further, as ranking is a core component of modern day technology and influences everyday decision making pipelines involving vulnerable populations, care needs to be taken that our proposed systems are extended to account for biases and fairness criteria when deployed in real world settings.

\ifanonymous
\else
\section{Acknowledgements}

Robin Swezey and Bruno Charron were supported by Rakuten, Inc. \\
Stefano Ermon is supported in part from NSF (\#1651565, \#1522054, \#1733686), ONR (N00014- 19-1-2145), AFOSR (FA9550-19-1-0024) and Bloomberg.
\fi

\bibliographystyle{unsrt}
{\footnotesize\bibliography{refs}}

\begin{thebibliography}{10}

\bibitem{grover2019stochastic}
Aditya Grover, Eric Wang, Aaron Zweig, and Stefano Ermon.
\newblock Stochastic optimization of sorting networks via continuous
  relaxations.
\newblock In {\em ICLR}, 2019.

\bibitem{liu2009learning}
Tie-Yan Liu et~al.
\newblock Learning to rank for information retrieval.
\newblock {\em Foundations and Trends{\textregistered} in Information
  Retrieval}, 3(3):225--331, 2009.

\bibitem{cossock2006subset}
David Cossock and Tong Zhang.
\newblock Subset ranking using regression.
\newblock In {\em COLT}, 2006.

\bibitem{li2008mcrank}
Ping Li, Qiang Wu, and Christopher~J Burges.
\newblock Mcrank: Learning to rank using multiple classification and gradient
  boosting.
\newblock In {\em NeurIPS}, 2008.

\bibitem{crammer2002pranking}
Koby Crammer and Yoram Singer.
\newblock Pranking with ranking.
\newblock In {\em NeurIPS}, 2002.

\bibitem{shashua2003ranking}
Amnon Shashua and Anat Levin.
\newblock Ranking with large margin principle: Two approaches.
\newblock In {\em NeurIPS}, 2003.

\bibitem{herbrich2000large}
Ralf Herbrich.
\newblock Large margin rank boundaries for ordinal regression.
\newblock {\em Advances in large margin classifiers}, pages 115--132, 2000.

\bibitem{freund2003efficient}
Yoav Freund, Raj Iyer, Robert~E Schapire, and Yoram Singer.
\newblock An efficient boosting algorithm for combining preferences.
\newblock {\em JMLR}, 4(Nov):933--969, 2003.

\bibitem{Burges2005}
Chris Burges, Tal Shaked, Erin Renshaw, Ari Lazier, Matt Deeds, Nicole
  Hamilton, and Greg Hullender.
\newblock {Learning to rank using gradient descent}.
\newblock In {\em ICML}, 2005.

\bibitem{burges2010ranknet}
Christopher~JC Burges.
\newblock From ranknet to lambdarank to lambdamart: An overview.
\newblock {\em Learning}, 11(23-581):81, 2010.

\bibitem{zheng2008general}
Zhaohui Zheng, Hongyuan Zha, Tong Zhang, Olivier Chapelle, Keke Chen, and
  Gordon Sun.
\newblock A general boosting method and its application to learning ranking
  functions for web search.
\newblock In {\em NeurIPS}, 2008.

\bibitem{cao2006adapting}
Yunbo Cao, Jun Xu, Tie-Yan Liu, Hang Li, Yalou Huang, and Hsiao-Wuen Hon.
\newblock Adapting ranking svm to document retrieval.
\newblock In {\em SIGIR}, 2006.

\bibitem{Burges2007}
Christopher~J.C. Burges, Robert Ragno, and Quoc {Viet Le}.
\newblock {Learning to rank with nonsmooth cost functions}.
\newblock In {\em NeurIPS}, 2007.

\bibitem{wu2009smoothing}
Mingrui Wu, Yi~Chang, Zhaohui Zheng, and Hongyuan Zha.
\newblock Smoothing {DCG} for learning to rank: A novel approach using smoothed
  hinge functions.
\newblock In {\em CIKM}, 2009.

\bibitem{cao2007learning}
Zhe Cao, Tao Qin, Tie-Yan Liu, Ming-Feng Tsai, and Hang Li.
\newblock Learning to rank: from pairwise approach to listwise approach.
\newblock In {\em ICML}, 2007.

\bibitem{xia2008listwise}
Fen Xia, Tie-Yan Liu, Jue Wang, Wensheng Zhang, and Hang Li.
\newblock Listwise approach to learning to rank: theory and algorithm.
\newblock In {\em ICML}, 2008.

\bibitem{xu2007adarank}
Jun Xu and Hang Li.
\newblock Adarank: a boosting algorithm for information retrieval.
\newblock In {\em SIGIR}, 2007.

\bibitem{yue2007support}
Yisong Yue, Thomas Finley, Filip Radlinski, and Thorsten Joachims.
\newblock A support vector method for optimizing average precision.
\newblock In {\em SIGIR}, 2007.

\bibitem{taylor2008softrank}
Michael Taylor, John Guiver, Stephen Robertson, and Tom Minka.
\newblock Softrank: optimizing non-smooth rank metrics.
\newblock In {\em WSDM}, 2008.

\bibitem{Qin2013}
Tao Qin and Tie-Yan Liu.
\newblock {Introducing LETOR 4.0 Datasets}.
\newblock 2013.

\bibitem{pasumarthi2019tf}
Rama~Kumar Pasumarthi, Sebastian Bruch, Xuanhui Wang, Cheng Li, Michael
  Bendersky, et~al.
\newblock Tf-ranking: Scalable tensorflow library for learning-to-rank.
\newblock In {\em KDD}, 2019.

\bibitem{liu2011learning}
Tie-Yan Liu.
\newblock {\em Learning to rank for information retrieval}.
\newblock Springer Science \& Business Media, 2011.

\bibitem{zhu2004recall}
Mu~Zhu.
\newblock Recall, precision and average precision.
\newblock {\em Department of Statistics and Actuarial Science, University of
  Waterloo, Waterloo}, 2:30, 2004.

\bibitem{jarvelin2002cumulated}
Kalervo J{\"a}rvelin and Jaana Kek{\"a}l{\"a}inen.
\newblock Cumulated gain-based evaluation of ir techniques.
\newblock {\em ACM Transactions on Information Systems}, 20(4):422--446, 2002.

\bibitem{qin2010general}
Tao Qin, Tie-Yan Liu, and Hang Li.
\newblock A general approximation framework for direct optimization of
  information retrieval measures.
\newblock {\em Information retrieval}, 13(4):375--397, 2010.

\bibitem{blondel2020fast}
Mathieu Blondel, Olivier Teboul, Quentin Berthet, and Josip Djolonga.
\newblock Fast differentiable sorting and ranking.
\newblock In {\em ICML}, 2020.

\bibitem{chapelle2010gradient}
Olivier Chapelle and Mingrui Wu.
\newblock Gradient descent optimization of smoothed information retrieval
  metrics.
\newblock {\em Information retrieval}, 13(3):216--235, 2010.

\bibitem{adams2011ranking}
Ryan~Prescott Adams and Richard~S Zemel.
\newblock Ranking via sinkhorn propagation.
\newblock {\em arXiv preprint arXiv:1106.1925}, 2011.

\bibitem{Mena2018}
Gonzalo Mena, David Belanger, Scott Linderman, and Jasper Snoek.
\newblock {Learning Latent Permutations with Gumbel-Sinkhorn Networks}.
\newblock In {\em ICLR}, 2018.

\bibitem{Cuturi2019}
Marco Cuturi, Olivier Teboul, and Jean-Philippe Vert.
\newblock {Differentiable Ranks and Sorting using Optimal Transport}.
\newblock In {\em NeurIPS}, 2019.

\bibitem{Burges2010}
Chris Burges.
\newblock {From RankNet to LambdaRank to LambdaMART: An Overview}.
\newblock {\em JMLR}, 41(4):574--581, 2010.

\bibitem{Reddi2021RankDistilKD}
Sashank~J. Reddi, Rama~Kumar Pasumarthi, Aditya~Krishna Menon, Ankit~Singh
  Rawat, Felix~X. Yu, Seungyeon Kim, Andreas Veit, and Sanjiv Kumar.
\newblock Rankdistil: Knowledge distillation for ranking.
\newblock In {\em AISTATS}, 2021.

\bibitem{Wang2013}
Yining Wang, Liwei Wang, Yuanzhi Li, Di~He, Wei Chen, and Tie~Yan Liu.
\newblock {A theoretical analysis of NDCG ranking measures}.
\newblock {\em JMLR}, 30:25--54, 2013.

\bibitem{prillo2020softsort}
Sebastian Prillo and Julian Eisenschlos.
\newblock Softsort: A continuous relaxation for the argsort operator.
\newblock In {\em ICML}, 2020.

\bibitem{xie2020differentiable}
Yujia Xie, Hanjun Dai, Minshuo Chen, Bo~Dai, Tuo Zhao, et~al.
\newblock Differentiable top-k operator with optimal transport.
\newblock {\em arXiv preprint arXiv:2002.06504}, 2020.

\bibitem{rigutini2011sortnet}
Leonardo Rigutini, Tiziano Papini, Marco Maggini, and Franco Scarselli.
\newblock Sortnet: Learning to rank by a neural preference function.
\newblock {\em IEEE transactions on neural networks}, 22(9):1368--1380, 2011.

\bibitem{Cao2015}
Ziqiang Cao, Furu Wei, Li~Dong, Sujian Li, and Ming Zhou.
\newblock {Ranking with recursive neural networks and its application to
  multi-document summarization}.
\newblock In {\em AAAI}, 2015.

\bibitem{xiong2017end}
Chenyan Xiong, Zhuyun Dai, Jamie Callan, Zhiyuan Liu, and Russell Power.
\newblock End-to-end neural ad-hoc ranking with kernel pooling.
\newblock In {\em SIGIR}, 2017.

\bibitem{plackett1975analysis}
Robin~L Plackett.
\newblock The analysis of permutations.
\newblock {\em Applied Statistics}, pages 193--202, 1975.

\bibitem{luce2012individual}
R~Duncan Luce.
\newblock {\em Individual choice behavior: A theoretical analysis}.
\newblock Courier Corporation, 1959.

\bibitem{bengio2013estimating}
Yoshua Bengio, Nicholas L{\'e}onard, and Aaron Courville.
\newblock Estimating or propagating gradients through stochastic neurons for
  conditional computation.
\newblock {\em arXiv preprint arXiv:1308.3432}, 2013.

\bibitem{Tange2011a}
O.~Tange.
\newblock Gnu parallel - the command-line power tool.
\newblock {\em ;login: The USENIX Magazine}, 36(1):42--47, Feb 2011.

\end{thebibliography}

\newpage
\pagebreak

\newpage
\pagebreak
\appendix 
\section*{Appendices}

\section{Further Related Works}
\label{sec:further_related}



\paragraph{Pairwise approaches.} Closely related to RankNet, pairwise approaches such as Sortnet~\cite{rigutini2011sortnet} and SmoothRank~\cite{chapelle2010gradient} casts sorting of $n$ elements as performing $n^2$ pairwise comparisons, and try to approximate the pairwise comparison operator for sorting. We consider a more direct relaxation with attractive properties for rankings that we describe in Section~\ref{sec:technical}.

\paragraph{Listwise approaches.} include ListNet~\cite{Cao2015} and ListMLE~\cite{xiong2017end}, which define surrogate losses that take into consideration the full predicted rank ordering while being agnostic to the downstream ranking metrics. 
ListNet for instance considers the predicted scores as parameters for the Plackett-Luce distribution~\cite{plackett1975analysis,luce2012individual} and learns these scores via maximum likelihood estimation.

\section{Proof of Convergence}
\label{sec:proof}

Used in a PiRank surrogate loss of Section~\ref{sec:relaxndcg}, the relaxation presented in Section~\ref{sec:algo} recovers the downstream metric by lowering the temperature as formalized in the result below for NDCG.
\begin{proposition}\label{thm:tmp}
If we assume that the entries of $\hat{\vy}$ are drawn independently from a distribution that is absolutely continuous w.r.t. the Lebesgue measure in $\mathbb{R}$, then the following convergence holds almost surely:
\begin{align}
\lim_{\tau\to 0^{+}} \hat{\ell}_{\mathrm{PiRank-NDCG}}(\vy, \hat{\vy}, \tau) = 1-\mathrm{NDCG}(\vy, \hat\pi)
\end{align}
where $\hat\pi = \sort{}(\hat{\vy})$.
\end{proposition}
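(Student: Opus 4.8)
The plan is to reduce the claim to a single convergence statement about the $k\times L$ matrix $\hat P=\hat P(\tau)$ constructed in Section~\ref{sec:algo}, and then to prove that statement by induction on the tree levels $j=0,1,\dots,d$. First I would observe that the denominator $\mathrm{DCG}(\vy,\pi^\ast)$ in Eq.~\ref{eq:neuralrank} does not depend on $\tau$, and that by construction $\widehat{\mathrm{DCG}}(\vy,\hat\vy,\tau)=\sum_{j=1}^{k}[\hat P(\tau)\vg]_j/\log_2(1+j)$ as in Eq.~\ref{eq:DCG_soft}; hence, via Eq.~\ref{eq:neuralrank_loss}, it suffices to show that $\hat P(\tau)$ converges entrywise, as $\tau\to 0^{+}$, to the matrix $P$ formed by the first $k$ rows of the true permutation matrix $P_{\sort(\hat\vy)}$. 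Substituting that limit recovers $\mathrm{DCG}(\vy,\hat\pi)$ (the truncated metric), then $\mathrm{NDCG}(\vy,\hat\pi)$, and finally $1-\mathrm{NDCG}(\vy,\hat\pi)$, the desired limit.

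Next I would dispose of the genericity assumption. By absolute continuity of the law of each coordinate, for any $i\neq i'$ the event $\{\hat y_i=\hat y_{i'}\}$ is Lebesgue-null, and there are only finitely many such pairs, so almost surely all coordinates of $\hat\vy$ are distinct; fix one such realization. Define ``hard'' tensors $Q^{(j)},Y^{(j)},P^{(j)}$ by replacing every $\operatorname{softmax}$ in the recursion of Section~\ref{sec:algo} by the corresponding one-hot $\operatorname{argmax}$. Here I would invoke the defining property of NeuralSort \cite{grover2019stochastic}: for a vector $v$ with pairwise-distinct entries, the pre-softmax logits of Eq.~\ref{eq:relaxed_perm} have, in row $l$, a unique maximizing coordinate, namely the index of the $l$-th largest entry of $v$. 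Because any two sibling subtrees index disjoint sets of leaves, each list of $n=k_{j-1}b_j$ values fed to a NeuralSort call is a subset of the distinct scalars $\{\hat y_1,\dots,\hat y_L\}$ and hence has distinct entries, so all these $\operatorname{argmax}$'s are well defined and each hard node value is literally the sorted vector of the top-$k_j$ of its children's concatenation.

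The core of the argument is the induction: $\hat Q^{(j)}(\tau)\to Q^{(j)}$, $\hat Y^{(j)}(\tau)\to Y^{(j)}$ and $\hat P^{(j)}(\tau)\to P^{(j)}$ as $\tau\to 0^{+}$. The base case $j=0$ is trivial since $\hat Y^{(0)}$ and $\hat P^{(0)}$ (the constant $1$) do not depend on $\tau$. For the inductive step, the main obstacle is that the argument of the level-$j$ softmax is $h(\hat Y^{(j-1)}(\tau))/\tau_j$, whose \emph{numerator} also varies with $\tau$, so one cannot simply appeal to continuity of softmax at $\tau=0$. I would handle this with a gap estimate: by the inductive hypothesis $\hat Y^{(j-1)}(\tau)\to Y^{(j-1)}$, whose entries within each block are distinct, so in each logit row the map $h(Y^{(j-1)})$ has a strictly positive gap $\delta$ between its largest and second-largest value; by continuity of $h$ (whose only nonsmooth pieces are absolute values), for $\tau$ small enough $h(\hat Y^{(j-1)}(\tau))$ is within $\delta/3$ of $h(Y^{(j-1)})$, hence has the same argmax and a gap at least $\delta/3$. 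Then every off-argmax coordinate of $\operatorname{softmax}[h(\hat Y^{(j-1)}(\tau))/\tau_j]$ is at most $e^{-\delta/(3\tau_j)}$ times the argmax coordinate, forcing $\hat Q^{(j)}(\tau)\to Q^{(j)}$; the convergences for $\hat Y^{(j)}$ and $\hat P^{(j)}$ then follow from their defining sums together with the inductive hypothesis.

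It remains to identify the hard limit: I would check that the reshaped $P^{(d)}$ equals $P$, i.e., that the truncated relaxed merge procedure is correct as an exact algorithm. This is exactly where the constraint $\min(k,k_{j-1}b_j)\le k_j\le k_{j-1}b_j$ is used: an element among the global top-$k$ of $\hat\vy$ has at most $k-1$ larger elements overall, hence at most $k-1$ larger elements inside any single node's concatenated child list, so it lies among the top $\min(k,k_{j-1}b_j)\le k_j$ of that list and survives the truncation at every level; by induction on $j$, all global top-$k$ elements reach the root, where the hard NeuralSort sorts them, so $Y^{(d)}$ is the sorted top-$k$ of $\hat\vy$ and the compounded one-hot matrix $P^{(d)}$ sends row $l$ to column $\hat\pi_l$. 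Combining this with the reduction of the first paragraph yields $\lim_{\tau\to0^{+}}\hat\ell_{\mathrm{PiRank-NDCG}}(\vy,\hat\vy,\tau)=1-\mathrm{NDCG}(\vy,\hat\pi)$ on the fixed realization, and since the excluded set of realizations is null, the convergence holds almost surely.
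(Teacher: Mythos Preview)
Your proof is correct and follows the same high-level strategy as the paper: induction on the tree levels $j$, invoking the NeuralSort row-wise argmax property for vectors with distinct entries, and then identifying the hard limit $P^{(d)}$ with the first $k$ rows of $P_{\sort(\hat\vy)}$ via the truncation constraint $k_j\ge\min(k,k_{j-1}b_j)$.

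The one substantive difference is in how the limit $\tau\to 0^{+}$ is handled. The paper simply \emph{declares} that this limit is to be read as the iterated limit $\lim_{\tau_d\to 0^{+}}\cdots\lim_{\tau_1\to 0^{+}}$, so that at each inductive step the input $\hat Y^{(j-1)}$ has already collapsed to the exact hard tensor $Y^{(j-1)}$; Theorem~1 of \cite{grover2019stochastic} then applies verbatim. You instead treat the joint limit (all $\tau_j\le\tau$ going to zero simultaneously) and close the gap with a continuity/spectral-gap argument: the logits $h(\hat Y^{(j-1)}(\tau))$ converge to $h(Y^{(j-1)})$, the limiting row has a strictly positive gap $\delta$ at its unique argmax, so for small $\tau$ the perturbed row has the same argmax and gap $\ge\delta/3$, forcing the softmax to one-hot as $\tau_j\to 0^{+}$. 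Your route is more work but more honest, since the paper's identification of the single limit with the nested one is asserted rather than argued; the paper's route buys a cleaner induction step by outsourcing the analysis entirely to the cited NeuralSort theorem.
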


\begin{proof}
In the $d>1$ case, the limit is interpreted as 
\begin{align}
    \lim_{\tau\to 0^{+}} = \lim_{\tau_d\to 0^{+}} \lim_{\tau_{d-1}\to 0^{+}} \dots \lim_{\tau_1\to 0^{+}}
\end{align}
given the increasing ordering of the temperatures by height and the constraint $\tau_d = \tau$.

We first sketch a proof by induction on the height $j$ that, under the same assumptions as the proposition, for all $i_{j+1}, \cdots, i_d$, the $k'_j$-dimensional vector
\begin{align}\label{eq:proof_y}
    \vy^{(j)}_{i_{j+1}, \cdots, i_d} \equiv \lim_{\tau_j\to 0^{+}} \hat Y^{(j)}_{:k'_j, i_{j+1}, \cdots, i_d}
\end{align}
with $k'_j = \min(k, k_j)$ and $:l$ the top-$l$ rows extraction,
contains the top-$k'_j$ scores in $\hat Y^{(0)}_{\cdot, \cdots, \cdot, i_{j+1}, \cdots, i_d}$ in descending order and the $k'_j \times L_j$ matrix
\begin{align}
    P^{(j)}_{i_{j+1}, \cdots, i_d} \equiv \lim_{\tau_j\to 0^{+}} \hat P^{(j)}_{:k'_j, i_{j+1}, \cdots, i_d}
\end{align}
with $L_j = b_1 \cdots b_j$ is the row-truncated permutation matrix realizing the ordering,
\begin{align}\label{eq:proof_py}
    \vy^{(j)}_{i_{j+1}, \cdots, i_d} = P^{(j)}_{i_{j+1}, \cdots, i_d} \hat Y^{(0)}_{\cdot, \cdots, \cdot, i_{j+1}, \cdots, i_d}
\end{align}
where reshaping as necessary is implicit in the above two equations.

For $j=0$, this is trivial as $P^{(0)} = 1$ and by convention $b_0 = k_0 = 1$.

Assuming the above is true for a height $j-1$, the top-$k'_j$ scores in $\hat Y^{(0)}_{\cdot, \cdots, \cdot, i_{j+1}, \cdots, i_d}$ are included in the concatenation of the vectors $\hat Y^{(j-1)}_{\cdot, i_j, \cdots, i_d}$ for $i_j \in \{1, \dots, b_j\}$ in the $\tau_{j-1}\to 0^+$ limit from the assumption (no limit for $j=1$).
$\hat Q^{(j)}_{\cdot, \cdot, \cdot, i_{j+1}, \cdots, i_d}$ is then the NeuralSort relaxed permutation matrix for these concatened vector.
From Theorem 1 of \cite{grover2019stochastic}, we know that in the $\tau_j \to 0^+$ limit, this matrix will converge to the sorting permutation matrix.
In this limit, $\hat Y^{(j)}_{\cdot, i_{j+1}, \cdots, i_d}$ is then sorted version of the concatened vector, so that in particular its top-$k'_j$ elements are the sorted top-$k'_j$ elements of the concatenated vector, proving the claim on $\vy^{(j)}_{i_{j+1}, \cdots, i_d}$.
Further, the claim on $P^{(j)}$ directly derives from the previous observation on the limit of $\hat Q^{(j)}_{\cdot, \cdot, \cdot, i_{j+1}, \cdots, i_d}$ and the fact that a product of permutation matrices which is the matrix of the product of the permutations.
This finishes the proof by induction.

Taking $j=d$, we obtain from Eq.~\ref{eq:proof_py} and the nature of permutation matrices that
\begin{align}\label{eq:proof-temp}
    \lim_{\tau\to 0^{+}} \widehat{P}_{\sort{}({\hat{y}})}(\tau)_{:k} = \left[ P_{\sort{}({\hat{y}})}\right]_{:k}.
\end{align}
From limit calculus, we know that the limit of finite sums is the sum of the limits and hence, substituting the above result in Eq.~\ref{eq:DCG_soft} we have:
\begin{align}
    \lim_{\tau\to 0^{+}} \widehat{\mathrm{DCG}}(\vy, \hat{\vy}, \tau) = \mathrm{DCG}(\vy, \hat\pi).
\end{align}
Substituting the above in Eq.~\ref{eq:neuralrank} and Eq.~\ref{eq:neuralrank_loss} proves the proposition.

\end{proof}

Note that the assumption of independent draws is needed to ensure that the elements of $\hat{\vy}$ are distinct almost surely.

\begin{table}
\centering
\caption{Shared parameter values for benchmark (Section~\ref{sec:benchmark}) and ablation (Section~\ref{sec:ablation}) experiments.}
\label{tab:expParams}
\begin{tabular}{|l|l|l|} 
\hline
Parameter                & Benchmark & Ablation                   \\ 
\hline
Hidden layer sizes       & 1024,512,256 & 256,256,128,128,64,64  \\
Hidden layer activations & ReLu & ReLu                   \\
Batch normalization     & Yes & No \\
Dropout rate            & 0.3 & 0               \\
Batch size               & 16 & 16                     \\
Learning rate            & 1.00E-03 & 1.00E-05               \\
Optimizer                & Adam & Adam                   \\
Iterations           & 100,000 steps & 100 epochs                    \\
Training list size $L_{train}$       & 200 & 100 when fixed                    \\
Testing list size $L_{test}$        & 200 & 100 when fixed                    \\
Temperature $\tau$ (PiRank \& NeuralSort) & 1000 & 5 when fixed  \\
Straight-through estimation (PiRank \& NeuralSort) & Yes & Yes  \\
NDCG cutoff $k$ (PiRank \& LambdaRank)     & 10 & 10 \\
Depth $d$ (PiRank)     & 1 & 1 when fixed \\
\hline
\end{tabular}
\end{table}

\begin{table}[t]
\centering
\scriptsize
\caption{Training list size effectiveness on ranking metrics}
\label{tab:listOPA}
\begin{tabular}{|r|rrrr|} 
\hline
\multicolumn{1}{|l|}{OPA}  & \multicolumn{1}{l}{$L_{train}$} & \multicolumn{1}{l}{} & \multicolumn{1}{l}{} & \multicolumn{1}{l|}{}  \\ 
\hline
\multicolumn{1}{|l|}{$L_{test}$} & 10                        & 20                   & 40                   & 100                    \\ 
\hline
10                          & 0.5830                    & 0.5947               & 0.5939               & \textbf{0.5949}       \\
20                          & 0.5852                    & 0.5949               & \textbf{0.5961}      & 0.5926                \\
40                          & 0.5816                    & 0.5935               & \textbf{0.5942}      & 0.5915                \\
100                         & 0.5755                    & 0.5859               & \textbf{0.5867}      & 0.5844                \\
\hline
\end{tabular}

\label{tab:listMRR}
\begin{tabular}{|r|rrrr|} 
\hline
\multicolumn{1}{|l|}{MRR}  & \multicolumn{1}{l}{$L_{train}$} & \multicolumn{1}{l}{} & \multicolumn{1}{l}{} & \multicolumn{1}{l|}{}  \\ 
\hline
\multicolumn{1}{|l|}{$L_{test}$} & 10                        & 20                   & 40                   & 100                    \\ 
\hline
10                          & 0.6691                    & 0.6830               & 0.6912               & \textbf{0.6949}       \\
20                          & 0.6835                    & 0.7048               & 0.7087               & \textbf{0.7172}       \\
40                          & 0.6732                    & 0.7042               & 0.7230               & \textbf{0.7350}       \\
100                         & 0.6628                    & 0.6985               & 0.7301               & \textbf{0.7548}       \\
\hline
\end{tabular}

\label{tab:listARP}
\begin{tabular}{|r|rrrr|} 
\hline
\multicolumn{1}{|l|}{ARP}  & \multicolumn{1}{l}{$L_{train}$} & \multicolumn{1}{l}{} & \multicolumn{1}{l}{} & \multicolumn{1}{l|}{}  \\ 
\hline
\multicolumn{1}{|l|}{$L_{test}$} & 10                        & 20                   & 40                   & 100                    \\ 
\hline
10                          & 5.0164           & 4.9584               & 4.9662               & \textbf{4.9428}                \\
20                          & 9.4277           & 9.3431               & \textbf{9.3334}      & 9.3401                         \\
40                          & 18.3042          & 18.0688              & \textbf{18.0493}     & 18.0617                        \\
100                         & 42.9107          & 42.4183              & \textbf{42.3972}     & 42.4091                        \\
\hline
\end{tabular}

\label{tab:listNDCG1}
\begin{tabular}{|r|rrrr|} 
\hline
\multicolumn{1}{|l|}{NDCG@1}  & \multicolumn{1}{l}{$L_{train}$} & \multicolumn{1}{l}{} & \multicolumn{1}{l}{} & \multicolumn{1}{l|}{}  \\ 
\hline
\multicolumn{1}{|l|}{$L_{test}$} & 10                        & 20                   & 40                   & 100                    \\ 
\hline
10                          & 0.3850                    & 0.4127               & 0.4140               & \textbf{0.4261}       \\
20                          & 0.3320                    & 0.3521               & 0.3670               & \textbf{0.3860}       \\
40                          & 0.2829                    & 0.3054               & 0.3403               & \textbf{0.3683}       \\
100                         & 0.2569                    & 0.2665               & 0.3401               & \textbf{0.3713}       \\
\hline
\end{tabular}

\label{tab:listNDCG3}
\begin{tabular}{|r|rrrr|} 
\hline
\multicolumn{1}{|l|}{NDCG@3}  & \multicolumn{1}{l}{$L_{train}$} & \multicolumn{1}{l}{} & \multicolumn{1}{l}{} & \multicolumn{1}{l|}{}  \\ 
\hline
\multicolumn{1}{|l|}{$L_{test}$} & 10                        & 20                   & 40                   & 100                    \\ 
\hline
10                          & 0.4610                    & 0.4793               & 0.4826               & \textbf{0.4878}       \\
20                          & 0.3757                    & 0.3885               & 0.4017               & \textbf{0.4092}       \\
40                          & 0.3188                    & 0.3373               & 0.3572               & \textbf{0.3731}       \\
100                         & 0.2780                    & 0.2963               & 0.3349               & \textbf{0.3579}       \\
\hline
\end{tabular}

\label{tab:listNDCG5}
\begin{tabular}{|r|rrrr|} 
\hline
\multicolumn{1}{|l|}{NDCG@5}  & \multicolumn{1}{l}{$L_{train}$} & \multicolumn{1}{l}{} & \multicolumn{1}{l}{} & \multicolumn{1}{l|}{}  \\ 
\hline
\multicolumn{1}{|l|}{$L_{test}$} & 10                        & 20                   & 40                   & 100                    \\ 
\hline
10                          & 0.5358                    & 0.5498               & 0.5531               & \textbf{0.5570}       \\
20                          & 0.4181                    & 0.4271               & 0.4388               & \textbf{0.4441}       \\
40                          & 0.3447                    & 0.3607               & 0.3780               & \textbf{0.3896}       \\
100                         & 0.2971                    & 0.3158               & 0.3461               & \textbf{0.3635}       \\
\hline
\end{tabular}

\label{tab:listNDCG10}
\begin{tabular}{|r|rrrr|} 
\hline
\multicolumn{1}{|l|}{NDCG@10}  & \multicolumn{1}{l}{$L_{train}$} & \multicolumn{1}{l}{} & \multicolumn{1}{l}{} & \multicolumn{1}{l|}{}  \\ 
\hline
\multicolumn{1}{|l|}{$L_{test}$} & 10                        & 20                   & 40                   & 100                    \\ 
\hline
10                          & 0.6994                    & 0.7100               & 0.7115               & \textbf{0.7141}       \\
20                          & 0.5090                    & 0.5165               & 0.5257               & \textbf{0.5305}       \\
40                          & 0.3989                    & 0.4106               & 0.4243               & \textbf{0.4337}       \\
100                         & 0.3330                    & 0.3485               & 0.3720               & \textbf{0.3878}      \\
\hline
\end{tabular}
\end{table}

\section{Experimental Details}
\label{sec:exp_details}

\label{sec:datasets}

\paragraph{Datasets.} We test \name{} on MSLR-WEB30K\footnote{\label{note:mslr}https://www.microsoft.com/en-us/research/project/mslr/} and the Yahoo! LTR dataset C14\footnote{https://webscope.sandbox.yahoo.com}.
MSLR-WEB30K contains 30,000 queries from Bing with feature vectors of length 136, while Yahoo! C14 dataset comprises 36,000 queries, 883,000 items and feature vectors of length 700.
In both datasets, the number of items per query can exceed 100, or even 1,000 in the case of MSLR-WEB30K.
Both datasets have relevance scores on a 5-point scale of 0 to 4, with 0 denoting complete irrelevance and 4 denoting perfect relevance.
Note that when using binary classification-based metrics such as mean-reciprocal rank, ordinal relevance score from 1 to 4 are mapped to ones. 
MSLR-WEB30K is provided in folds of training / validation / test sets rotating on 5 subsets of data, and we choose to use Fold1 for our experiments.
For Yahoo! C14, we use ``Set 1" which is the larger of the two provided sets.
For both datasets, we use the standard train/validation/test splits.
We use the validation split for both early stopping and hyperparameter selection for all approaches.


\paragraph{TFR Implementation.} We provide a TensorFlow Ranking implementation of the \name{} NDCG Loss as well as the original NeuralSort Permutation Loss which can be plugged in directly into TensorFlow Ranking.\ifanonymous\else\footnote{https://github.com/ermongroup/pirank}\fi


\paragraph{Straight-through Estimation.} The \name{} surrogate learning objective can be optimized via two gradient-based techniques in practice.
The default mode of learning is to use the relaxed objective during both forward pass for evaluating the loss and for computing gradients via backpropogation.
Alternatively, we can perform \textit{straight-through estimation}~\cite{bengio2013estimating}, where we use the hard version for evaluating the loss forward, but use the relaxed objective in Eq.~\ref{eq:neuralrank} for gradient evaluation.
We observe improvements from the latter option and use it throughout.
The hard version can be obtained via exact sorting of the predicted scores.
In the context of a unimodal relaxation (Sec~\ref{sec:algo}), a hard version can also be obtained via a row-wise arg max operation of the relaxed permutation matrix, which recovers an actual permutation matrix usable in the downstream objective.


\paragraph{Architecture and Parameters.} Experiment parameters that are shared across losses, such as the scoring neural network architecture, batch size, training and test list sizes, are provided in Table~\ref{tab:expParams}, along with loss-specific parameters if they differ from the default TensorFlow Ranking setting.


\paragraph{Experimental Workflow.} We rely on TensorFlow Ranking for most of our work outside the NeuralSort and \name{} loss implementations, which takes care of query grouping, document list tensor construction, baseline implementation and metric computation among others.

\paragraph{Computing infrastructure.} The experiments were run on a server with 4 8-core Intel Xeon E5-2620v4 CPUs, 128 GB of RAM and 4 NVIDIA Telsa K80 GPUs.


\paragraph{Libraries and Software.} This work extensively relied on GNU Parallel \cite{Tange2011a} and the Sacred library \footnote{https://github.com/IDSIA/sacred} for experiments.

\paragraph{Licenses.} 
TensorFlow Ranking is licensed under the Apache License 2.0 \footnote{https://github.com/tensorflow/ranking/blob/master/LICENSE}. GNU Parallel is licensed under the GNU General Public License \footnote{https://www.gnu.org/licenses/gpl-3.0.html}. Sacred is licensed under the MIT License \footnote{https://github.com/IDSIA/sacred/blob/master/LICENSE.txt}. The dataset MSLR-WEB30K is licensed under the Microsoft Research License Agreement (MSR-LA). The license files for the dataset Yahoo! C14 are provided in the datasets at download time from their homepages, and included in the supplemental material.
Our released \name{} code is licensed under the MIT license. 

\section{Ablation Experiments}
\label{sec:size_exps}

We provide all results for the temperature experiments described in Section~\ref{sec:ablation}, in Figures~\ref{fig:tempndcg1}, \ref{fig:tempndcg3}, \ref{fig:tempndcg5}, \ref{fig:tempndcg15}, \ref{fig:temptrainloss}, \ref{fig:temperature}. The smoothing parameter used in all plots is 0.9, the number of data points is 100 epochs for all figures except the training loss (1,000 iterations).
We provide additional plots with a basic exponentially decreasing annealing schedule and a very high temperature value of 1e12 to show limits of the relaxation on Figure~\ref{fig:newtemps}.
Full results for the ablation experiments on the training list size described in Section~\ref{sec:ablation} are provided in Table~\ref{tab:listMRR}.

\section{Synthetic LTR Data}
\label{sec:synthetic}

To the best of our knowledge, there is no public LTR dataset with very large numbers of documents per query ($L > 1000$). We thus propose the following synthetic dataset for testing and development at scale (see Section~\ref{sec:depthablation}):

For each query $q_i$, $i \in \{1, \cdots, n\}$,
\begin{enumerate}
    \item Generate $L$ documents $\{\mathbf{x}_{i,j}\}_{j=1}^L$ where $\mathbf{x}_{i,j}$ is a vector of $m_d$ $\phi$-distributed document features.
    \item Randomly pick a vector $c_i$ of $m_q \leq m_d$ column indices from $\{1, \cdots, m_d\}$ without replacement.
    \item Generate $\psi$-distributed query features $\{\gamma_i\}_{k=1}^{m_q}$.
    \item Compute labels capped between $\ell$ and $h$ s.t. \\ $y_{i,j} = max(\ell, min(h, \sum_{k=1}^{m_q}\gamma_k x_{i,j,c_k} ))  $.
    \item Concatenate the query features $\{\gamma_i\}_{k=1}^{m_q}$ to each $\mathbf{x}_{i,j}$.
\end{enumerate}

This process allows us to generate datasets of arbitrarily large size, where we control $L$, $n$, $m$, $c$ and the distributions $\phi$ and $\psi$.
The process is easy to reuse, and made available in our TFR codebase.


\begin{figure}[h]
    \centering
    \includegraphics[width=0.65\textwidth]{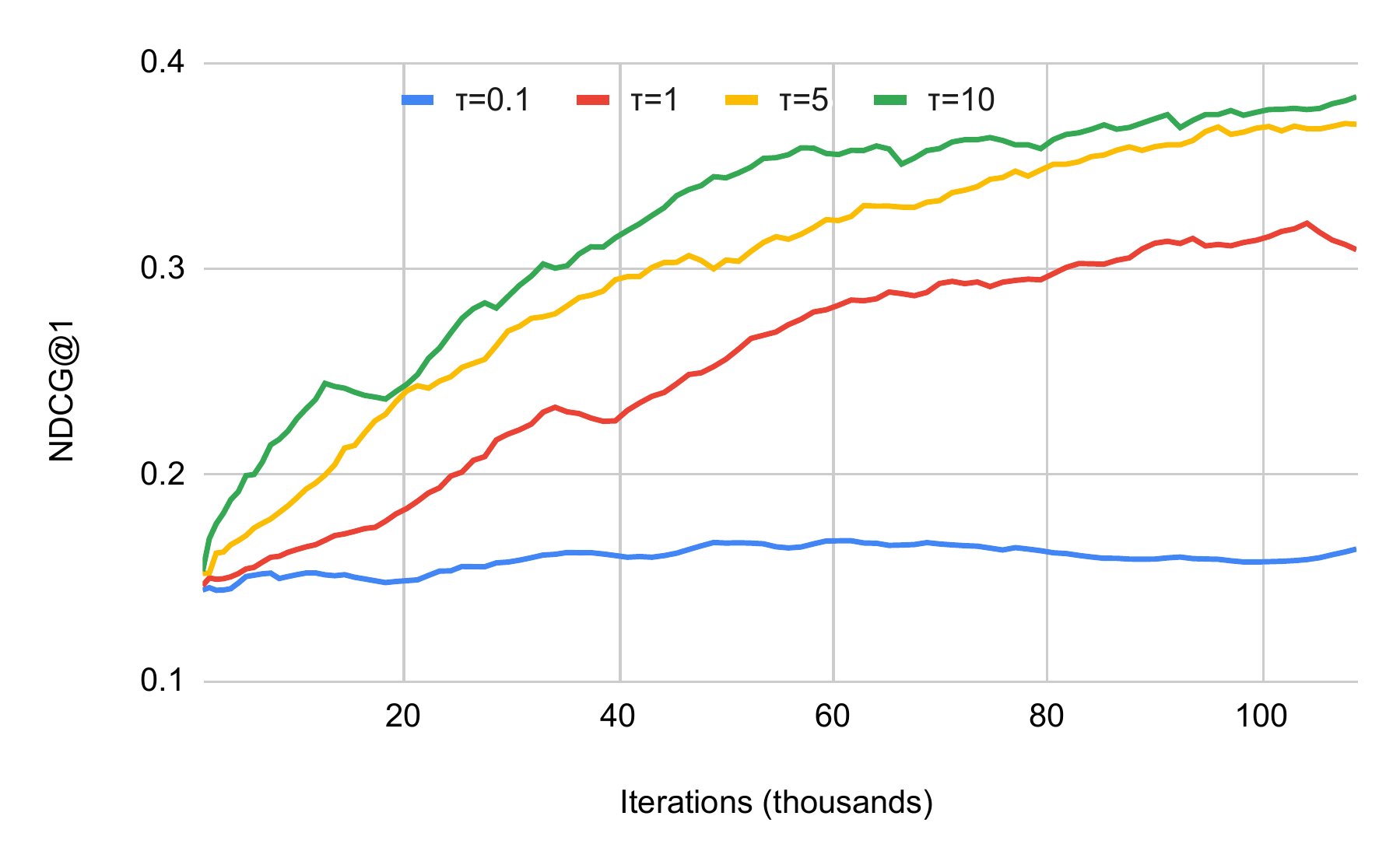}
    \caption{Validation NDCG@1 during \name{} training parametrized by temperature $\tau$}
    \label{fig:tempndcg1}
\end{figure}

\begin{figure}[h]
    \centering
    \includegraphics[width=0.65\textwidth]{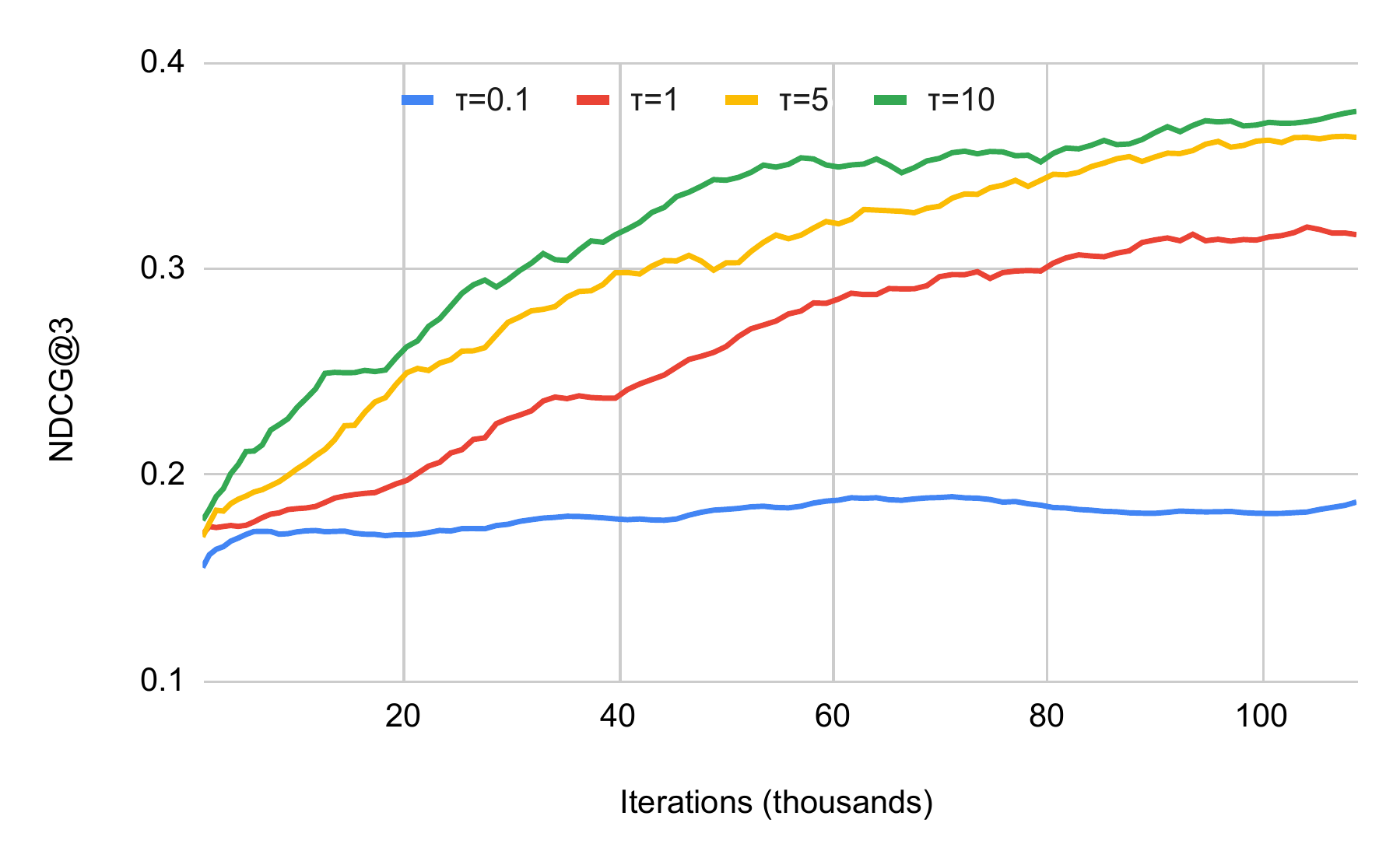}
    \caption{Validation NDCG@3 during \name{} training parametrized by temperature $\tau$}
    \label{fig:tempndcg3}
\end{figure}

\begin{figure}[h]
    \centering
    \includegraphics[width=0.65\textwidth]{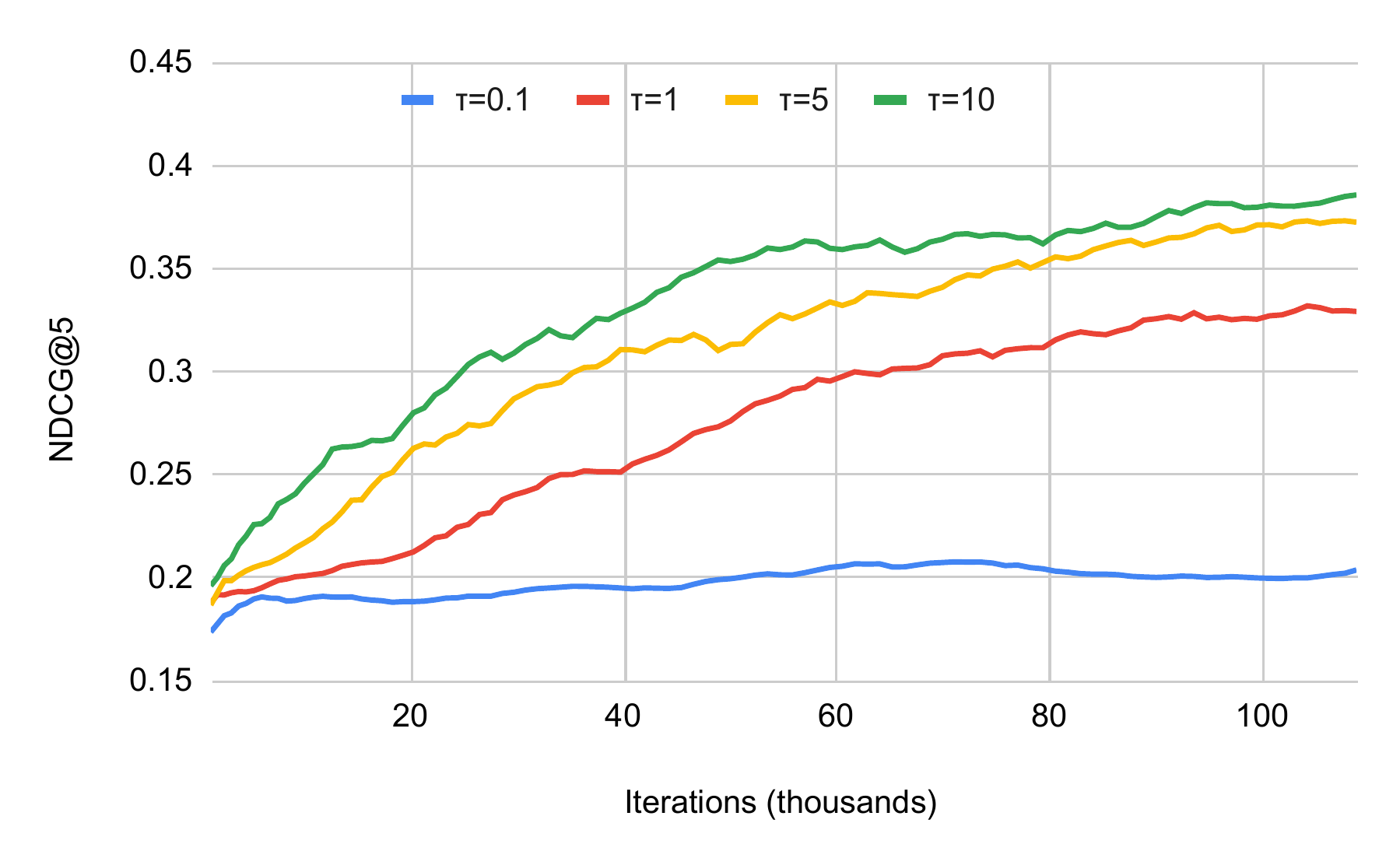}
    \caption{Validation NDCG@5 during \name{} training parametrized by temperature $\tau$}
    \label{fig:tempndcg5}
\end{figure}

\begin{figure}[h]
    \centering
    \includegraphics[width=0.65\textwidth]{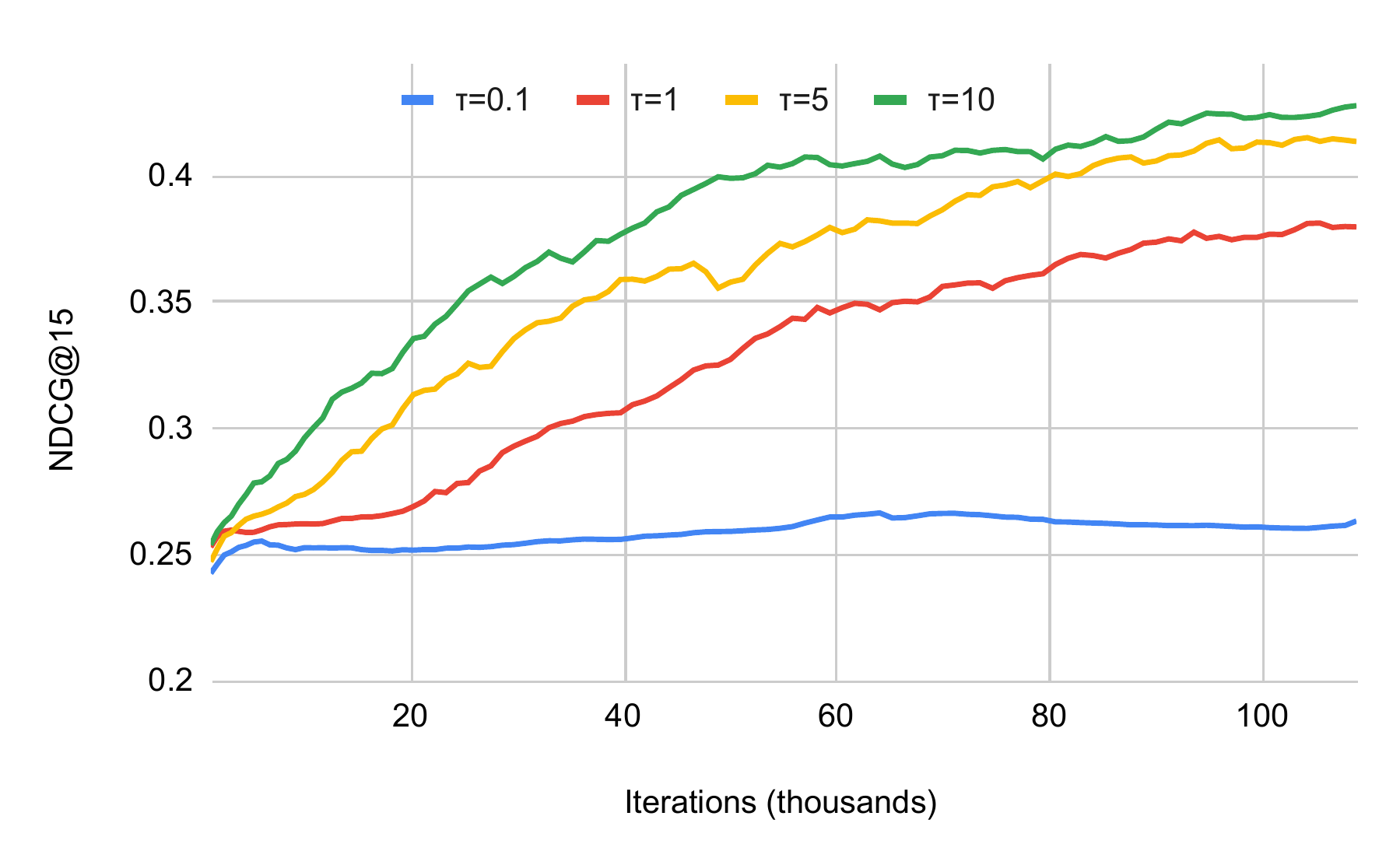}
    \caption{Validation NDCG@15 during \name{} training parametrized by temperature $\tau$}
    \label{fig:tempndcg15}
\end{figure}

\begin{figure}[h]
    \centering
    \includegraphics[width=0.65\textwidth]{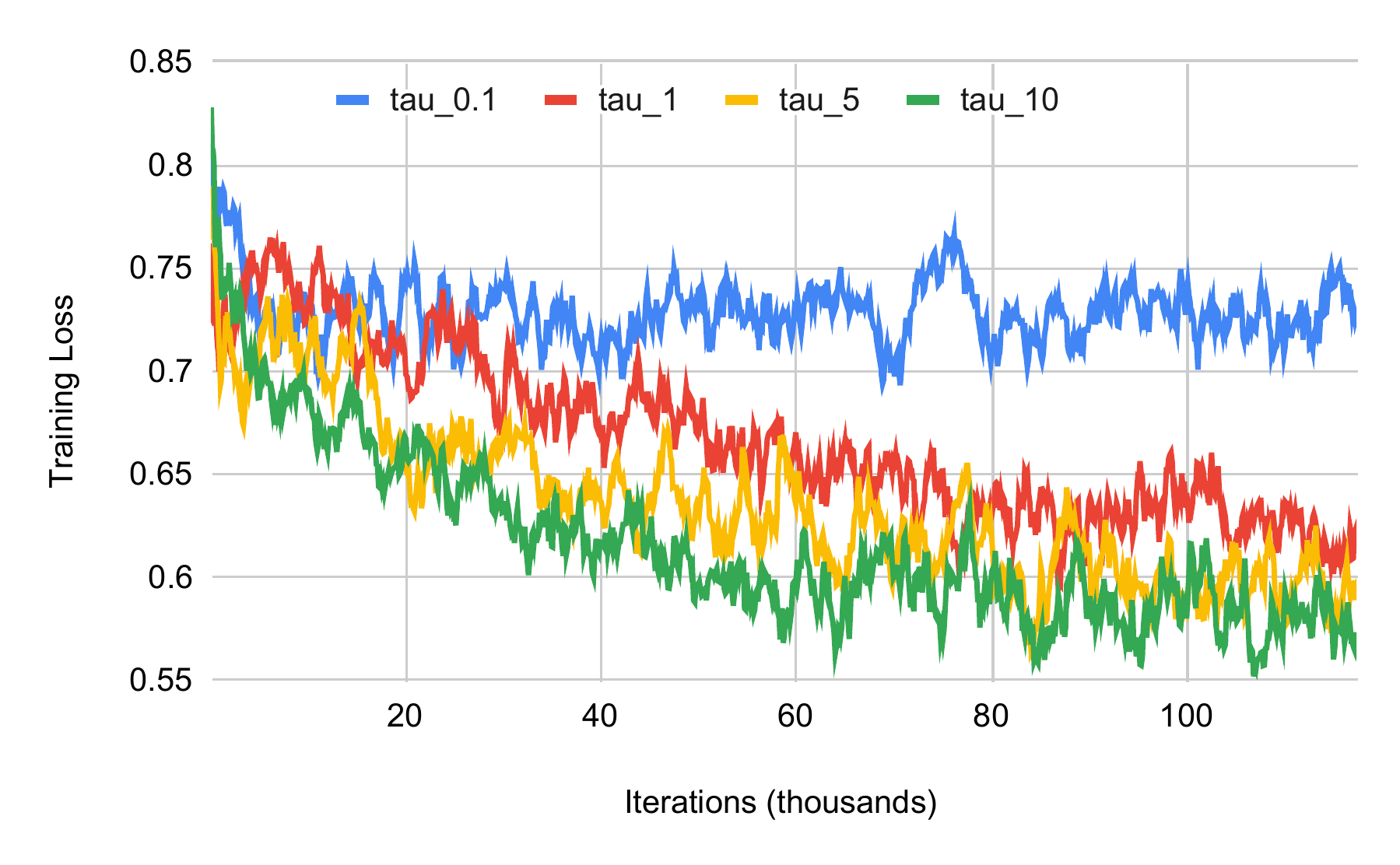}
    \caption{Training loss during \name{} training parametrized by temperature $\tau$}
    \label{fig:temptrainloss}
    \vspace{4in}
\end{figure}

\begin{figure}[h]
    \centering
    \includegraphics[width=0.9\textwidth]{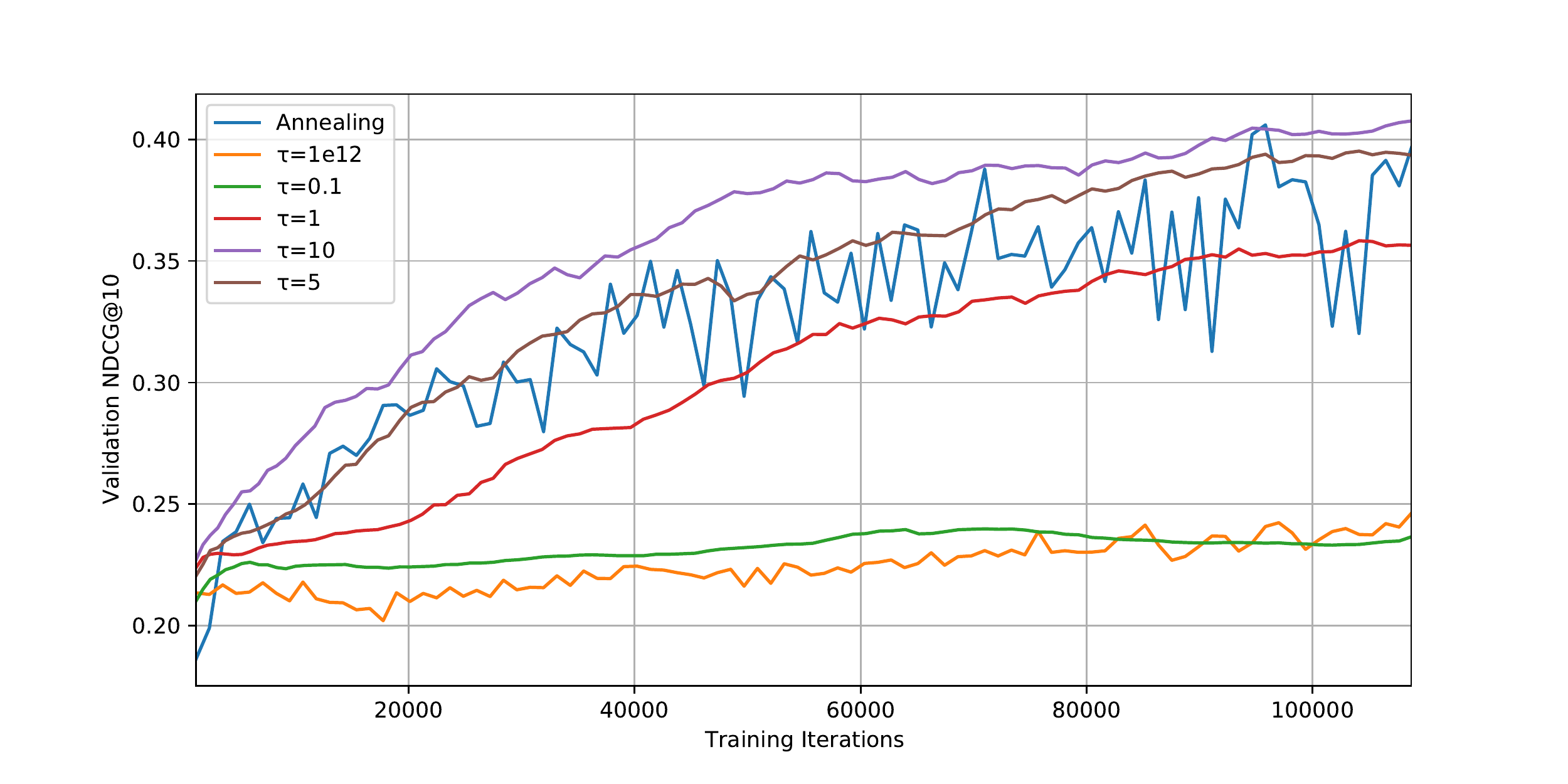}
    \caption{Validation NDCG@10 for PiRank-NDCG@10 using the experimental settings of Section~\ref{sec:ablation}. This figure shows the validation NDCG@10 from Figure~\ref{fig:temperature} superimposed with an annealing schedule temperature (blue) and a very high temperature of 1e12 (orange).}
    \label{fig:newtemps}
    \vspace{4in}
\end{figure}


\end{document}